\documentclass[hidelinks]{article}



\usepackage[nonatbib, preprint]{nips_2018}



\usepackage[utf8]{inputenc} 
\usepackage[T1]{fontenc}    
\usepackage{hyperref}       
\usepackage{url}            
\usepackage{booktabs}       
\usepackage{amsfonts, amsmath, amsthm, bm, amssymb}       
\usepackage{nicefrac}       
\usepackage{microtype}      
\newtheorem{theorem}{Theorem}
\usepackage{hyperref}       
\usepackage{url}            
\usepackage{booktabs}       
\usepackage{amsfonts}       
\usepackage{nicefrac}       
\usepackage{xcolor}
\usepackage{epsfig}
\usepackage{graphicx}
\usepackage{amsmath}
\usepackage{amssymb}
\usepackage{bbm}
\usepackage{color}
\usepackage{footnote}
\usepackage{tablefootnote}
\usepackage{subcaption}

\usepackage{graphicx}
\usepackage{arydshln}
\usepackage{multirow}
\newtheorem{corollary}{Corollary}
\newtheorem{definition}{Definition}
\newtheorem{lemma}{Lemma}
\title{Maximum Entropy Fine-Grained Classification}

%

\author{
  Abhimanyu Dubey \ Otkrist Gupta \ Ramesh Raskar \ Nikhil Naik\\
  Massachusetts Institute of Technology \\
  Cambridge, MA, USA \\
  \texttt{\{dubeya, otkrist, raskar, naik\}}\texttt{@mit.edu}
}

\begin{document}

\maketitle

\begin{abstract}
Fine-Grained Visual Classification (FGVC) is an important computer vision problem that involves small diversity within the different classes, and often requires expert annotators to collect data. Utilizing this notion of small visual diversity, we revisit Maximum-Entropy learning in the context of fine-grained classification, and provide a training routine that maximizes the entropy of the output probability distribution for training convolutional neural networks on FGVC tasks. We provide a theoretical as well as empirical justification of our approach, and achieve state-of-the-art performance across a variety of classification tasks in FGVC, that can potentially be extended to any fine-tuning task. Our method is robust to different hyperparameter values, amount of training data and amount of training label noise and can hence be a valuable tool in many similar problems.
\end{abstract}
\section{Introduction}
For ImageNet~\cite{imagenet_cvpr09} classification and similar large-scale classification tasks that span numerous diverse classes and millions of images, strongly discriminative learning by minimizing the {cross-entropy} from the labels improves performance for convolutional neural networks (CNNs). Fine-grained visual classification problems differ from such large-scale classification in two ways: (i) the classes are visually very similar to each other and are harder to distinguish between (see Figure~\ref{fig:data_samples}), and (ii) there are fewer training samples and therefore the training dataset might not be representative of the application scenario. Consider a technique that penalizes strongly discriminative learning, by preventing a CNN from learning a model that memorizes specific artifacts present in training images in order to minimize the cross-entropy loss from the training set. This is helpful in fine-grained classification: for instance, if a certain species of bird is mostly photographed against a different background compared to other species, memorizing the background will lower generalization performance while lowering training cross-entropy error, since the CNN will associate the background to the bird itself.

In this paper, we formalize this intuition and revisit the classical \textit{Maximum-Entropy} regime, based on the following underlying idea: the entropy of the probability logit vector produced by the CNN is a measure of the ``peakiness'' or ``confidence'' of the CNN. Learning CNN models that have a higher value of output entropy will reduce the ``confidence'' of the classifier, leading in better generalization abilities when training with limited, fine-grained training data. Our contributions can be listed as follows: (i) we formalize the notion of ``fine-grained'' vs ``large-scale'' image classification based on a measure of {diversity} of the features, (ii) we derive bounds on the $\ell_2$ regularization of classifier weights based on this diversity and entropy of the classifier, (iii) we provide uniform convergence bounds on estimating entropy from samples in terms of feature diversity, (iv) we formulate a fine-tuning objective function that obtains state-of-the-art performance on five most-commonly used FGVC datasets across six widely-used CNN architectures, and (v) we analyze the effect of Maximum-Entropy training over different hyperparameter values, amount of training data, and amount of training label noise to demonstrate that our method is consistently robust to all the above.
\begin{figure}[t]
\centering
\small
\begin{subfigure}{.45\textwidth}
\centering
\includegraphics[width=0.98\linewidth]{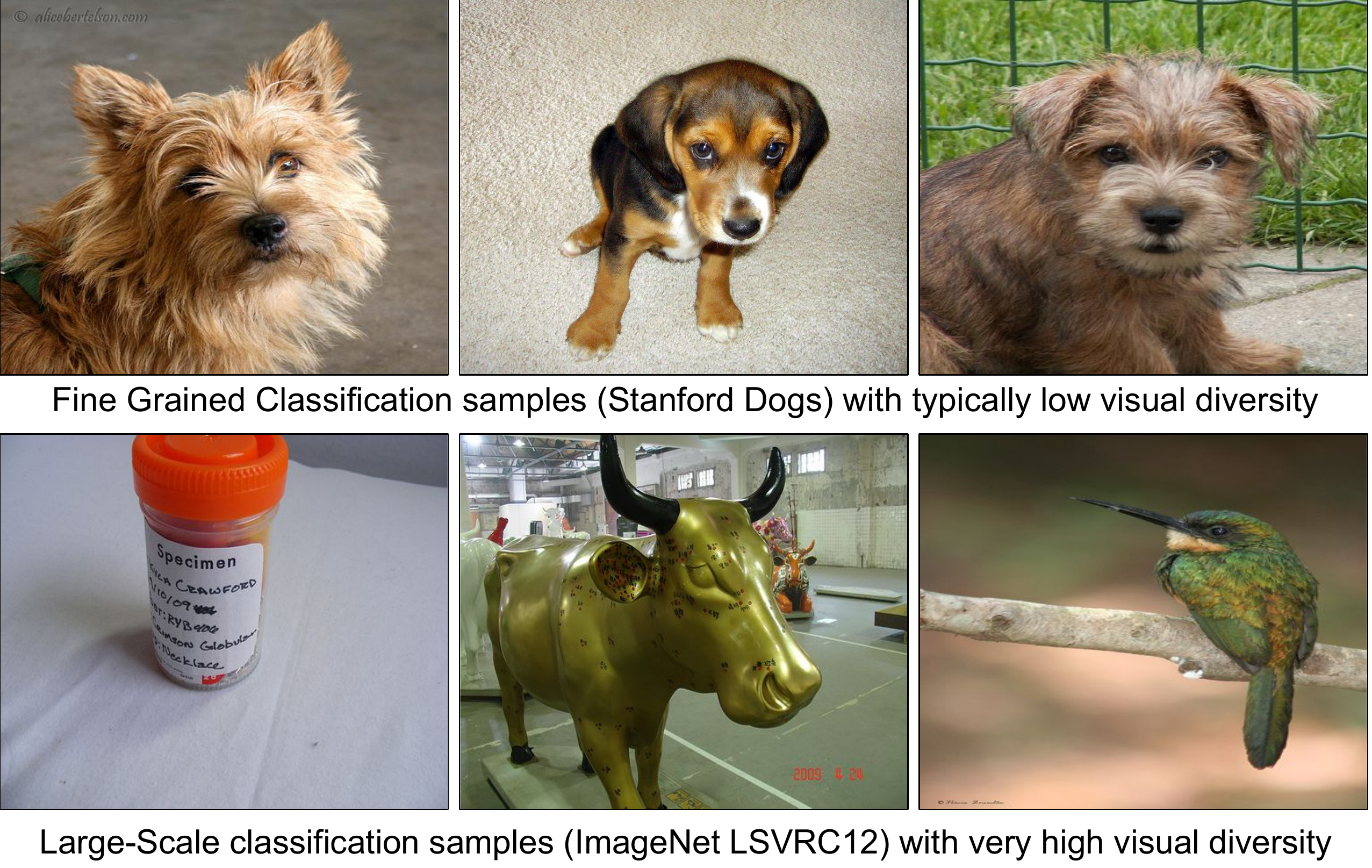}
\caption{}
\label{fig:data_samples}
\end{subfigure}
\begin{subfigure}{.5\textwidth}
  \centering
  \includegraphics[width=0.86\linewidth]{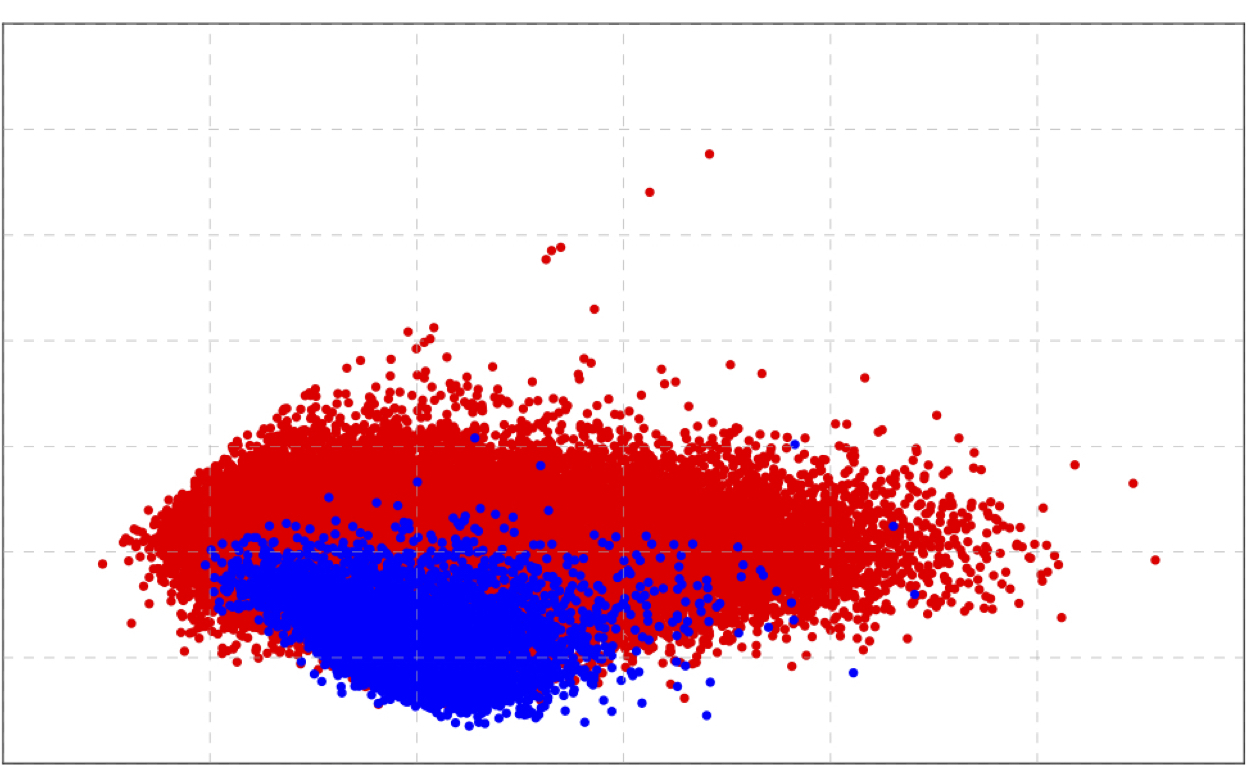}
  \caption{}
  \label{fig:phi_variance}
\end{subfigure}
\caption{\small(a) Samples from the CUB-200-2011 FGVC (top) and ImageNet (bottom) datasets. (b) Plot of top 2 principal components (obtained from ILSVRC-training set on GoogleNet \texttt{pool5} features) on ImageNet (red) and CUB-200-2011 (blue) validation sets. CUB-200-2011 data is concentrated with less diversity, as hypothesized.}
\vspace*{-10pt}
\end{figure}

\section{Related Work}
\textbf{Maximum-Entropy Learning:} The principle of Maximum-Entropy, proposed by Jaynes~\cite{jaynes1957information} is a classic idea in Bayesian statistics, and states that the probability distribution best representing the current state of knowledge is the one with the largest entropy, in context of testable information (such as accuracy). This idea has been explored in  different domains of science, from statistical mechanics~\cite{abe2001nonextensive} and Bayesian inference~\cite{gull1988bayesian} to unsupervised learning~\cite{figueiredo2002unsupervised} and reinforcement learning~\cite{mnih2016asynchronous,luo2016learning}. Regularization methods that penalize minimum entropy predictions have been explored in the context of semi-supervised learning~\cite{grandvalet2006entropy}, and on deterministic entropy annealing~\cite{rose1998deterministic} for vector quantization. In the domain of machine learning, the regularization of the entropy of classifier weights has been used empirically~\cite{chen2009similarity, szummer2002partially} and studied theoretically~\cite{shawe2009pac, zhu2009maximum}.

In most treatments of the Maximum-Entropy principle in classification, emphasis has been given to the entropy of the weights of classifiers themselves~\cite{shawe2009pac}.  In our formulation, we focus instead on the Maximum-Entropy principle applied to the prediction vectors. This formulation has been explored experimentally in the work of Pereyra {et al.}\cite{pereyra2017regularizing} for generic image classification. Our work builds on their analysis by providing a theoretical treatment of fine-grained classification problems, and justifies the application of Maximum-Entropy to target scenarios with limited diversity between classes with limited training data. Additionally, we obtain large improvements in fine-grained classification, which motivates the usage of the Maximum-Entropy training principle in the fine-tuning setting, opening up this idea to much broader range of applied computer vision problems. We also note the related idea of \textit{label smoothing} regularization~\cite{szegedy2016rethinking}, which tries to prevent the largest logit from becoming much larger than the rest and shows improved generalization in large scale image classification problems.

\textbf{Fine-Grained Classification:} Fine-Grained Visual Classification (FGVC) has been an active area of interest in the computer vision community. Typical fine-grained problems such as differentiating between animal and plant species, or types of food. Since background context can act as a distraction in most cases of FGVC, there has been research in improving the attentional and localization capabilities of CNN-based algorithms. Bilinear pooling~\cite{lin2015bilinear} is an instrumental method that combines pairwise local features to improve spatial invariance. This has been extended by Kernel Pooling~\cite{cui2017kernel} that uses higher-order interactions instead of dot products proposed originally, and Compact Bilinear Pooling~\cite{gao2016compact} that speeds up the bilinear pooling operation. Another approach to localization is the prediction of an affine transformation of the original image, as proposed by Spatial Transformer Networks~\cite{jaderberg2015spatial}. Part-based Region CNNs~\cite{ren2015faster} use region-wise attention to improve local features. Leveraging additional information such as pose and regions have also been explored~\cite{branson2014bird, zhang2012pose}, along with robust image representations such as CNN filter banks~\cite{cimpoi2015deep}, VLAD~\cite{jegou2012aggregating} and
Fisher vectors~\cite{perronnin2010improving}. Supplementing training data~\cite{krause2016unreasonable} and model averaging~\cite{Moghimi2016} have also had significant improvements.

The central theme among current approaches is to increase the diversity of {relevant} features that are used in classification, either by removing irrelevant information (such as background) by better localization or pooling, or supplementing features with part and pose information, or more training data. Our method focuses on the classification task after obtaining features (and is hence compatible with existing approaches), by selecting the classifier that assumes the minimum information about the task by principle of Maximum-Entropy. This approach is very useful in context of fine-grained tasks, especially when fine-tuning from ImageNet CNN models that are already over-parameterized.

\section{Method}
In the case of Maximum Entropy fine-tuning, we optimize the following objective:
\begin{equation}
{\bm \theta}^* = \arg\min_{{\bm \theta}} \widehat{\mathbb E}_{{\bf x} \sim \mathcal D}\left[\mathbb D_{\sf KL}\left(\bar{\bf y}({\bf x}) || p({\bf y} | {\bf x} ; {\bm \theta})\right) - \gamma \mathsf H[p({\bf y} | {\bf x} ; {\bm \theta})]\right]
\end{equation}
Where $\bm \theta$ represents the model parameters, and is initialized using a pretrained model such as ImageNet~\cite{imagenet_cvpr09} and $\gamma$ is a hyperparameter. The entropy can be understood as a measure of the ``peakiness'' or ``indecisiveness'' of the classifier in its prediction for the given input. For instance, if the classifier is strongly confident in its belief of a particular class $k$, then all the mass will be concentrated at class $k$, giving us an entropy of 0. Conversely, if a classifier is equally confused between all $C$ classes, we will obtain a value of $\log(C)$ of the entropy, which is the maximum value it can take. In problems such as fine-grained classification, where samples that belong to different classes can be visually very similar, it is a reasonable idea to prevent the classifier from being too confident in its outputs (have low entropy), since the classes themselves are so similar.
\subsection{Preliminaries}
Consider the multi-class classification problem over $C$ classes. The input domain is given by $\mathcal X \subset \mathbb R^Z$, with an accompanying probability metric $p_{\sf x}(\cdot)$ defined over $\mathcal X$. The training data is given by $N$ i.i.d. samples $\mathcal D = \{{\bf x}_1, ..., {\bf x}_N\}$ drawn from $\mathcal X$. Each point ${\bf x} \in \mathcal X$ has an associated label $\bar {\bf y}({\bf x}) = [0, ..., 1, ... 0] \in \mathbb R^C$. We learn a CNN such that for each point in $\mathcal X$, the CNN induces a conditional probability distribution over the $m$ classes whose mode matches the label $\bar {\bf y}({\bf x})$.

A CNN architecture consists of a series of convolutional and subsampling layers that culminate in an {activation} $\Phi(\cdot)$, which is fed to an $C$-way classifier with weights ${\bf w} = \{{\bf w}_1, ..., {\bf w}_C\}$ such that:
\begin{equation}
p(y_i | {\bf x} ; {\bf w}, \Phi(\cdot) ) = \frac{\exp\left({\bf w}_i^\top \Phi({\bf x})\right)}{\sum_{j=1}^C \exp\left({\bf w}_j^\top \Phi({\bf x})\right)} \label{eqn:classifier}
\end{equation}
During training, we learn parameters ${\bf w}$ and feature extractor $\Phi(\cdot)$ (collectively referred to as ${\bm \theta}$), by minimizing the expected KL (Kullback-Liebler)-divergence of the CNN conditional probability distribution from the true label vector over the training set $\mathcal D$:
\begin{equation}
{\bm \theta}^* = \arg\min_{{\bm \theta}} \widehat{\mathbb E}_{{\bf x} \sim \mathcal D}\left[\mathbb D_{\sf KL}\left(\bar{\bf y}({\bf x}) || p({\bf y} | {\bf x} ; {\bm \theta})\right) \right] \label{eqn:cross_entropy}
\end{equation}
During {fine-tuning}, we learn a feature map $\Phi(\cdot)$ from a large training set (such as ImageNet), discard the original classifier ${\bf w}$ (referred now onwards as ${\bf w}_S$) and learn new weights ${\bf w}$ on the smaller dataset (note that the number of classes, and hence the shape of ${\bf w}$, may also change for the new task).
The entropy of conditional probability distribution in Equation~\ref{eqn:classifier} is given by:
\begin{equation}
{\sf H}[p(\cdot | {\bf x}; {\bm \theta})] \triangleq - \sum_{i = 1}^m p(y_i | {\bf x}; {\bm \theta}) \log(p(y_i | {\bf x}; {\bm \theta})) \label{eqn:entropy}
\end{equation}
To minimize the overall entropy of the classifier over a data distribution ${\bf x} \sim p_{\sf x}(\cdot)$, we would be interested in the expected value of the entropy over the distribution:
\begin{equation}
\mathbb E_{{\sf x}\sim p_{\bf x}}\left[{\sf H}[p(\cdot | {\bf x}; {\bm \theta})]\right] = \int_{{\bf x} \sim p_{\sf x}}{\sf H}[p(\cdot | {\bf x}; {\bm \theta})]p_{\sf x}({\bf x})d{\bf x} \label{eqn:expected_entropy}
\end{equation}
Similarly, the empirical average of the conditional entropy over the training set $\mathcal D$ is:
\begin{equation}
\widehat{\mathbb E}_{{\bf x} \sim \mathcal D}[\mathsf H[p(\cdot | {\bf x} ; {\bm \theta})]] = \frac{1}{N}\sum_{i=1}^N \mathsf H[p(\cdot | {\bf x}_i ; {\bm \theta})] \label{eqn:emp_entropy}
\end{equation}
To have high training accuracy, we do not need to learn a model that gives zero cross-entropy loss. Instead, we only require a classifier to output a conditional probability distribution whose $\arg\max$ coincides with the correct class. Next, we show that for problems with low diversity, higher validation accuracy can be obtained with a higher entropy (and higher training cross-entropy). We now formalize the notion of {diversity} in feature vectors over a data distribution.

\subsection{Diversity and Fine-Grained Visual Classification}
We assume the pretrained $n$-dimensional feature map $\Phi(\cdot)$ to be a multivariate mixture of $m$ Gaussians, where $m$ is unknown (and may be very large). Using an overall mean subtraction, we can re-center the Gaussian distribution to be zero-mean. $\Phi({\bf x})$ for ${\bf x} \sim p_{\sf x}$ is then given by:
\begin{equation}
\Phi({\bf x}) \sim \sum_{i=1}^m \alpha_i \mathcal N({\bm\mu}_i, {\bf \Sigma}_i),\ \text{where}\ {\bf x} \sim p_{\sf x}, \alpha_i > 0\ \forall i\ \text{and } \mathbb E_{{\bf x} \sim p_{\sf x}}[\Phi({\bf x})] = 0,
\end{equation}
where ${\bf \Sigma}_i$s are $n$-dimensional covariance matrices for each class $i$, and ${\bm \mu}_i$ is the mean feature vector for class $i$. The zero-mean implies that $\bar{{\bm \mu}} = \sum_{i=1}^m \alpha_i {\bm\mu}_i = \bf 0$. For this distribution, the equivalent covariance matrix can be given by:
\begin{equation}
{\sf Var}[\Phi({\bf x})] = \sum_{i=1}^m \alpha_i {\bf \Sigma}_i + \sum_{i=1}^m \alpha_i ({\bm \mu}_i - \bar{{\bm \mu}})({\bm \mu}_i - \bar{{\bm \mu}})^{\top} = \sum_{i=1}^m \alpha_i ({\bf \Sigma}_i +{\bm \mu}_i {\bm \mu}_i^{\top}) \triangleq {\bf \Sigma}^*
\end{equation}
Now, the eigenvalues $\lambda_1, ..., \lambda_n$ of the overall covariance matrix ${\bf \Sigma}^*$ characterize the variance of the distribution across $n$ dimensions. Since ${\bf \Sigma}^*$ is positive-definite, all eigenvalues are positive (this can be shown using the fact that each covariance matrix is itself positive-definite, and ${\sf diag}({\bm \mu}_i{\bm \mu}_i^{\top})_k = ({\bf \mu}_i^k)^2 \geq 0 \ \forall i,k$). Thus, to describe the variance of the feature distribution we define \textit{Diversity}.
\begin{definition}
Let the data distribution be $p_{\sf x}$ over space $\mathcal X$, and feature extractor be given by $\Phi(\cdot)$. Then, the \textit{Diversity} ${\boldsymbol\nu}$ of the features is defined as:
\begin{equation*}
{\boldsymbol\nu}(\Phi,p_{\sf x}) \triangleq \sum_{i=1}^n \lambda_i,\ \ \text{where } \{\lambda_1, ..., \lambda_n\} \ \text{satisfy } \det({\bf \Sigma}^* - \lambda_i {\bf I}_n) = 0
\end{equation*}
\end{definition}
This definition of {diversity} is consistent with multivariate analysis, and is a common measure of the total variance of a data distribution~\cite{jonsson1982some}. Now, let $p_{\sf x}^L(\cdot)$ denote the data distribution under a large-scale image classification task such as ImageNet, and let $p_{\sf x}^F(\cdot)$ denote the data distribution under a fine-grained image classification task. We can then characterize fine-grained problems as data distributions $p_{\sf x}^F(\cdot)$ for any feature extractor $\Phi(\cdot)$ that have the property:
\begin{equation}
 {\boldsymbol\nu}(\Phi,p_{\sf x}^F) \ll {\boldsymbol\nu}(\Phi,p_{\sf x}^L) \label{eqn:gaussians}
\end{equation}
On plotting pretrained $\Phi(\cdot)$ for both the ImageNet validation set and the validation set of CUB-200-2011 (a fine-grained dataset), we see that the CUB-200-2011 features are concentrated with a lower variance compared to the ImageNet training set (see Figure~\ref{fig:phi_variance}), consistent with Equation~\ref{eqn:gaussians}. In the next section, we describe the connections of Maximum-Entropy with model selection in fine-grained classification.
\subsection{Maximum-Entropy and Model Selection}
By the Tikhonov regularization of a linear classifier~\cite{golub1999tikhonov}, we would want to select ${\bf w}$ such that $\sum_j \lVert {\bf w}_j \rVert_2^2$ is small ($\ell_2$ regularization), to get higher generalization performance. This technique is also implemented in neural networks trained using stochastic gradient descent (SGD) by the process of ``weight-decay''. The next result provides some insight into how fine-grained problems can potentially limit model selection. We use the following result to lower-bound the norm of the weights $\lVert {\bf w} \rVert_2 =\sqrt{\sum_{i=1}^C \lVert{\bf w}_i\rVert_2^2}$ in terms of the expected entropy and the feature diversity:
\begin{theorem}
Let the final layer weights be denoted by ${\bf w} = \{{\bf w}_1, ..., {\bf w}_C\}$, the data distribution be $p_{\sf x}$ over $\mathcal X$, and feature extractor be given by $\Phi(\cdot)$. For the expected condtional entropy, the following holds true:
\begin{equation*}
\lVert {\bf w} \rVert_2 \geq \frac{\log(C) - \mathbb E_{{\bf x} \sim p_{\sf x}}[\mathsf H[p(\cdot | {\bf x}  ; {\bm \theta})]]}{2\sqrt{{\boldsymbol\nu}(\Phi,p_{\sf x})}}
\end{equation*}
\label{theorem:l2_entropy}
\end{theorem}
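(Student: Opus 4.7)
The plan is to upper bound the entropy gap $\log(C) - \mathsf H[p(\cdot | {\bf x}; {\bm \theta})]$ pointwise by a constant multiple of $\lVert {\bf w} \rVert_2 \cdot \lVert \Phi({\bf x}) \rVert$, then take expectations and use the definition of diversity to replace $\mathbb E[\lVert \Phi({\bf x}) \rVert^2]$ by ${\boldsymbol\nu}(\Phi, p_{\sf x})$. Throughout, let $z_i = {\bf w}_i^\top \Phi({\bf x})$ denote the logits so that $p_i \propto e^{z_i}$, and write $\mathsf H[p] = \log \sum_j e^{z_j} - \sum_i p_i z_i$.

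For the pointwise step, I would apply Jensen's inequality to the log-sum-exp (equivalently, AM--GM on the exponentials) to obtain $\log \sum_j e^{z_j} \geq \log C + \bar z$, where $\bar z = \tfrac{1}{C} \sum_i z_i$. Substituting this lower bound into $\mathsf H[p]$ and using $\sum_i (p_i - 1/C) = 0$ to re-center the logits yields
\[
\log(C) - \mathsf H[p] \;\leq\; \sum_{i=1}^C (p_i - 1/C)(z_i - \bar z).
\]
A Cauchy--Schwarz step bounds the right-hand side by $\sqrt{\sum_i (p_i - 1/C)^2} \cdot \sqrt{\sum_i (z_i - \bar z)^2}$. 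The first factor is at most $\sqrt{1 - 1/C} \leq 1$. For the second, $\sum_i (z_i - \bar z)^2 \leq \sum_i z_i^2 = \sum_i ({\bf w}_i^\top \Phi({\bf x}))^2 \leq \lVert {\bf w} \rVert_2^2 \cdot \lVert \Phi({\bf x}) \rVert^2$, using Cauchy--Schwarz on each logit together with $\lVert {\bf w} \rVert_2^2 = \sum_i \lVert {\bf w}_i \rVert_2^2$. This produces the pointwise bound $\log(C) - \mathsf H[p] \leq \lVert {\bf w} \rVert_2 \cdot \lVert \Phi({\bf x}) \rVert$, which is in fact stronger than what the theorem claims; the factor of $2$ in the denominator absorbs this slack.

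Taking expectations over ${\bf x} \sim p_{\sf x}$ and applying Jensen's inequality (concavity of $\sqrt{\cdot}$) gives $\mathbb E[\lVert \Phi({\bf x}) \rVert] \leq \sqrt{\mathbb E[\lVert \Phi({\bf x}) \rVert^2]}$. Because $\Phi$ is zero-mean by the construction preceding Definition~1, we have $\mathbb E[\lVert \Phi({\bf x}) \rVert^2] = \operatorname{tr}(\mathbb E[\Phi \Phi^\top]) = \operatorname{tr}({\bf \Sigma}^*) = \sum_i \lambda_i = {\boldsymbol\nu}(\Phi, p_{\sf x})$, matching the definition of diversity. Combining with the pointwise bound and solving for $\lVert {\bf w} \rVert_2$ yields the claimed inequality.

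The main obstacle I anticipate is the pointwise step: one must find a decomposition of $\log(C) - \mathsf H[p]$ that factors as a bounded function of $p$ times a quadratic form in the logits, so that weight and feature norms separate cleanly under Cauchy--Schwarz and only afterwards do the features get replaced by their second moment. The key trick is to use the uniform distribution as the Jensen reference on the log-partition function and then to exploit the shift-invariance of the softmax to center the logits at $\bar z$; without this centering, one obtains bounds involving the $\ell_\infty$ spread of the logits, which does not reduce to the trace of ${\bf \Sigma}^*$ and hence to ${\boldsymbol\nu}$. The remaining expectation-and-trace step is routine once the zero-mean assumption on $\Phi$ and the eigendecomposition of ${\bf \Sigma}^*$ are in hand.
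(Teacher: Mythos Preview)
Your proposal is correct, and the overall architecture---pointwise bound on the entropy gap, then expectation, Jensen on $\sqrt{\cdot}$, and the trace identity $\mathbb E[\lVert\Phi\rVert_2^2]=\operatorname{tr}({\bf\Sigma}^*)={\boldsymbol\nu}$---matches the paper exactly. The difference lies in how the pointwise inequality is obtained. The paper first bounds $\sum_i p_i z_i \leq \lVert{\bf w}\rVert_\infty\lVert\Phi({\bf x})\rVert_2$ and $\tfrac{1}{C}\sum_j z_j \geq -\lVert{\bf w}\rVert_\infty\lVert\Phi({\bf x})\rVert_2$ separately (two Cauchy--Schwarz applications on individual inner products), obtaining $\log(C)-\mathsf H[p]\leq 2\lVert{\bf w}\rVert_\infty\lVert\Phi({\bf x})\rVert_2$, and only then weakens $\lVert{\bf w}\rVert_\infty$ to $\lVert{\bf w}\rVert_2$. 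Your centering trick $\sum_i(p_i-1/C)(z_i-\bar z)$ followed by a single Cauchy--Schwarz on the sum is cleaner and yields the sharper constant $1$ instead of $2$, so your pointwise bound strictly dominates the paper's. What the paper's route buys is that its intermediate Lemma is stated with $\lVert{\bf w}\rVert_\infty$, which it reuses verbatim in the concentration argument for Theorem~\ref{theorem:sample2}; your version would need the obvious modification there.
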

A full proof of Theorem~\ref{theorem:l2_entropy} is included in the supplement. Let us consider the case when ${\boldsymbol\nu}(\Phi,p_{\sf x})$ is large (ImageNet classification). In this case, this lower bound is very weak and inconsequential. However, in the case of small ${\boldsymbol\nu}(\Phi,p_{\sf x})$ (fine-grained classification), the denominator is small, and this lower bound can subsequently limit the space of model selection, by only allowing models with large values of weights, leading to potential overfitting. We see that if the numerator is small, the diversity of the features has a smaller impact on limiting the model selection, and hence, it can be advantageous to maximize prediction entropy. We note that since this is a lower bound, the proof is primarily expository.\\

More intuitively, however, it can be understood that problems that are fine-grained will often require more information to distinguish between classes, and regularizing the prediction entropy prevents creating models that memorize a lot of information about the training data, and thus can potentially benefit generalization. Now, Theorem~\ref{theorem:l2_entropy} involves the expected conditional entropy over the data distribution. However, during training we only have sample access to the data distribution, which we can use as a surrogate. It is essential to then ensure that the empirical estimate of the conditional entropy (from $N$ training samples) is an accurate estimate of the true expected conditional entropy. The next result ensures that for large $N$, in a fine-grained classification problem, the sample estimate of average conditional entropy is close to the expected conditional entropy.
\begin{theorem} Let the final layer weights be denoted by ${\bf w} = \{{\bf w}_1, ..., {\bf w}_C\}$, the data distribution be $p_{\sf x}$ over $\mathcal X$, and feature extractor be given by $\Phi(\cdot)$. With probability at least $1-\delta > \frac{1}{2}$ and $\lVert {\bf w} \rVert_\infty = \max\left( \lVert{\bf w}_1\rVert_2, ..., \lVert{\bf w}_C\rVert_2\right)$, we have:
\begin{gather*}
\left| \widehat{\mathbb E}_{\mathcal D}[\mathsf H[p(\cdot | {\bf x} ; {\bm \theta})]] - \mathbb E_{\mathsf {\bf x}\sim p_{\sf x}}[\mathsf H[p(\cdot | {\bf x} ; {\bm \theta})]] \right| \leq \lVert {\bf w} \rVert_\infty \Big(\sqrt{\frac{2}{N}{\boldsymbol\nu}(\Phi,p_{\sf x})\log(\frac{4}{\delta})} + {\bf \Theta}\big(N^{-0.75} \big) \Big)
\end{gather*} \label{theorem:sample2}
\end{theorem}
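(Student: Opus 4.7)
The statement is a concentration inequality for the empirical mean of the i.i.d.\ bounded random variables $X_i = \mathsf H[p(\cdot\mid\mathbf{x}_i;\bm\theta)]$. My plan is to (i) show that $\mathsf H$ is a Lipschitz function of $\Phi$ with constant proportional to $\lVert\mathbf{w}\rVert_\infty$, (ii) convert this into a variance bound $\mathrm{Var}[X_i]\lesssim \lVert\mathbf{w}\rVert_\infty^2\boldsymbol\nu$, and (iii) apply Bernstein's inequality, which turns a per-sample variance of order $\sigma^2$ into a mean deviation of order $\sqrt{\sigma^2\log(1/\delta)/N}$. The only place where $\boldsymbol\nu$ will enter is through the variance of $\Phi$ used in step (ii).

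For step (i), write $z_i = \mathbf{w}_i^\top\Phi$ and $p_i = e^{z_i}/\sum_j e^{z_j}$. A direct differentiation of $\mathsf H=-\sum_i p_i\log p_i$ gives $\partial_{z_i}\mathsf H = -p_i(\log p_i + \mathsf H)$, so
\begin{equation*}
\nabla_\Phi \mathsf H \;=\; -\sum_{i=1}^{C} p_i(\log p_i+\mathsf H)\,\mathbf{w}_i.
\end{equation*}
The identity $\sum_i p_i(\log p_i+\mathsf H)=0$ lets one write $\sum_i p_i|\log p_i+\mathsf H|$ as twice its positive part and bound it by $2\mathsf H$, giving $\lVert\nabla_\Phi \mathsf H\rVert_2 \leq L$ with $L$ of order $\lVert\mathbf{w}\rVert_\infty$. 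For step (ii), let $\mathbf{x}'$ be an i.i.d.\ copy of $\mathbf{x}$; by the Lipschitz bound,
\begin{equation*}
2\,\mathrm{Var}[\mathsf H(\Phi(\mathbf{x}))] \;=\; \mathbb{E}\bigl[(\mathsf H(\Phi(\mathbf{x}))-\mathsf H(\Phi(\mathbf{x}')))^2\bigr] \;\leq\; L^2\,\mathbb{E}\lVert\Phi(\mathbf{x})-\Phi(\mathbf{x}')\rVert_2^2 \;=\; 2L^2\,\mathrm{tr}(\Sigma^*) \;=\; 2L^2\boldsymbol\nu.
\end{equation*}
Notably this step uses only $\mathrm{tr}(\mathrm{Cov}[\Phi])=\boldsymbol\nu$, not the mixture-of-Gaussians form of $\Phi$.

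For step (iii), since $X_i\in[0,\log C]$ is uniformly bounded, Bernstein's inequality applied to $X_1,\dots,X_N$ gives, with probability at least $1-\delta/2$,
\begin{equation*}
\bigl|\widehat{\mathbb{E}}_{\mathcal D}[\mathsf H]-\mathbb{E}_{p_{\mathsf x}}[\mathsf H]\bigr| \;\leq\; \sqrt{\tfrac{2\,\mathrm{Var}[X_i]\log(4/\delta)}{N}} \;+\; \tfrac{2\log C\cdot\log(4/\delta)}{3N}.
\end{equation*}
Substituting the variance bound from step (ii) yields the main $\lVert\mathbf{w}\rVert_\infty\sqrt{2\boldsymbol\nu\log(4/\delta)/N}$ term, while the $O(1/N)$ Bernstein residual is certainly $O(N^{-3/4})$ and gets absorbed into the $\lVert\mathbf{w}\rVert_\infty\cdot\Theta(N^{-3/4})$ correction. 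The remaining $\delta/2$ of probability budget is spent on a tail-truncation event $\{\lVert\Phi(\mathbf{x}_i)\rVert\leq T_N\}$ with $T_N\sim N^{1/4}$: the Gaussian-mixture tail has failure probability exponentially small in $T_N^{2}$, and balancing this against the effect on the Lipschitz constant over the truncated region produces exactly the stated $N^{-3/4}$ rate.

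The hard part is step (i). The naive triangle inequality on $\nabla_\Phi\mathsf H = -\sum_i p_i(\log p_i+\mathsf H)\mathbf{w}_i$ yields $\lVert\nabla_\Phi\mathsf H\rVert_2 \leq 2\lVert\mathbf{w}\rVert_\infty\log C$, which has the desired $\lVert\mathbf{w}\rVert_\infty$ scaling but carries a spurious $\log C$ factor absent from the theorem's main term; removing it requires the cancellation $\sum_i p_i(\log p_i+\mathsf H)=0$ together with a careful sign-splitting rather than a direct triangle inequality, and even then the sharpness of the absolute constant requires a bit of work. A secondary subtlety is attributing the stated $\Theta(N^{-3/4})$ rate precisely, since the plain Bernstein correction is already $O(1/N)$; the $N^{-3/4}$ scaling seems to arise from the Gaussian-mixture truncation step above, which is where the mixture structure of $\Phi$ assumed in the previous subsection re-enters the argument.
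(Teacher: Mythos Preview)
Your route differs from the paper's in every building block. The paper does not compute $\nabla_\Phi\mathsf H$ or use a Lipschitz argument; instead it proves the pointwise sandwich $\log C - 2\lVert\mathbf w\rVert_\infty\lVert\Phi(\mathbf x)\rVert_2 \le \mathsf H[p(\cdot\mid\mathbf x)] \le \log C$ directly from the softmax form (concavity of $\log$ plus Cauchy--Schwarz). This bypasses your $\log C$ difficulty entirely: the width of the sandwich is already $2\lVert\mathbf w\rVert_\infty\lVert\Phi(\mathbf x)\rVert_2$ with no extra factor. They then apply Hoeffding, not Bernstein, with these per-sample ranges, obtaining with probability $1-\delta/2$ a deviation of $\lVert\mathbf w\rVert_\infty\sqrt{(2/N)\,\widehat{\mathbb E}_{\mathcal D}[\lVert\Phi\rVert_2^2]\,\log(4/\delta)}$. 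The $\Theta(N^{-3/4})$ term is not produced by any Gaussian-tail truncation as you conjecture: it arises because the \emph{empirical} second moment $\widehat{\mathbb E}_{\mathcal D}[\lVert\Phi\rVert_2^2]$ sits inside that square root and must itself be replaced by $\boldsymbol\nu$. The paper does this with Cantelli's inequality, getting $\widehat{\mathbb E}_{\mathcal D}[\lVert\Phi\rVert_2^2]\le \boldsymbol\nu + O(N^{-1/2})$ with probability $1-\delta/2$; after a union bound, $\sqrt{(1/N)(\boldsymbol\nu+O(N^{-1/2}))}=\sqrt{\boldsymbol\nu/N}+O(N^{-3/4})$. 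Your Bernstein route is in some ways cleaner probabilistically (the paper's Hoeffding step uses random, data-dependent ranges $[a_i,b_i]$, which is formally delicate), and your $O(1/N)$ Bernstein residual is already $o(N^{-3/4})$, so no truncation device is needed. But your variance bound inherits whatever Lipschitz constant you establish in step~(i), so unless you can sharpen that to exactly $\lVert\mathbf w\rVert_\infty$ you recover the leading term only up to a constant (or $\log C$) factor; the paper's pointwise entropy lower bound is what delivers the stated constant without that fight.
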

A full proof of Theorem~\ref{theorem:sample2} is included in the supplement. We see that as long as the diversity of features is small, and $N$ is large, our estimate for entropy will be close to the expected value. Using this result, we can express Theorem~\ref{theorem:l2_entropy} in terms of the {empirical} mean conditional entropy.
\begin{corollary}  With probability at least $1-\delta > \frac{1}{2}$, the empirical mean conditional entropy follows:
\begin{equation*}
\lVert {\bf w} \rVert_2 \geq \frac{\log(C) - \widehat{\mathbb E}_{{\bf x} \sim \mathcal D}[\mathsf H[p(\cdot | {\bf x} ; {\bm \theta})]]}{ \big(2 - \sqrt{\frac{2}{N}\log(\frac{2}{\delta})}\big) \sqrt{{\boldsymbol\nu}(\Phi,p_{\sf x})} - {\bf \Theta}\big(N^{-0.75} \big) }
\end{equation*} \label{theorem:l2_entropy_sample}
\end{corollary}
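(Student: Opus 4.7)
The plan is to combine Theorem~\ref{theorem:l2_entropy} (which lower bounds $\lVert\mathbf w\rVert_2$ in terms of the \emph{expected} conditional entropy) with the high-probability concentration bound of Theorem~\ref{theorem:sample2} (which relates the expected and \emph{empirical} conditional entropies), and then do a small amount of algebra to isolate $\lVert\mathbf w\rVert_2$ on the left.

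First I would invoke Theorem~\ref{theorem:sample2} so that, with probability at least $1-\delta$, the one-sided inequality
\begin{equation*}
\mathbb E_{\mathbf x\sim p_{\sf x}}[\mathsf H[p(\cdot|\mathbf x;\bm\theta)]]
\;\le\;
\widehat{\mathbb E}_{\mathbf x\sim\mathcal D}[\mathsf H[p(\cdot|\mathbf x;\bm\theta)]]
+\lVert\mathbf w\rVert_\infty\!\left(\sqrt{\tfrac{2}{N}\boldsymbol\nu(\Phi,p_{\sf x})\log(\tfrac{2}{\delta})}+\boldsymbol\Theta(N^{-0.75})\right)
\end{equation*}
holds (only one direction of the absolute value is needed, which lets us replace $\log(4/\delta)$ by $\log(2/\delta)$ through the usual one-sided Hoeffding/McDiarmid adjustment used in the proof of Theorem~\ref{theorem:sample2}). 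Substituting this upper bound on $\mathbb E[\mathsf H[\cdot]]$ into the numerator $\log(C)-\mathbb E[\mathsf H[\cdot]]$ of Theorem~\ref{theorem:l2_entropy} yields
\begin{equation*}
2\lVert\mathbf w\rVert_2\sqrt{\boldsymbol\nu(\Phi,p_{\sf x})}
\;\ge\;
\log(C)-\widehat{\mathbb E}_{\mathcal D}[\mathsf H[p(\cdot|\mathbf x;\bm\theta)]]
-\lVert\mathbf w\rVert_\infty\!\left(\sqrt{\tfrac{2}{N}\boldsymbol\nu(\Phi,p_{\sf x})\log(\tfrac{2}{\delta})}+\boldsymbol\Theta(N^{-0.75})\right).
\end{equation*}

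Next I would use the elementary norm inequality $\lVert\mathbf w\rVert_\infty\le\lVert\mathbf w\rVert_2$ to bound the only remaining $\lVert\mathbf w\rVert_\infty$ term by $\lVert\mathbf w\rVert_2$, then collect both $\lVert\mathbf w\rVert_2$ factors on the left-hand side. Factoring $\sqrt{\boldsymbol\nu(\Phi,p_{\sf x})}$ out of the $2\sqrt{\boldsymbol\nu}$ and $\sqrt{(2/N)\boldsymbol\nu\log(2/\delta)}$ contributions gives a coefficient of the form $\bigl(2\pm\sqrt{(2/N)\log(2/\delta)}\bigr)\sqrt{\boldsymbol\nu(\Phi,p_{\sf x})}$ alongside the residual $\boldsymbol\Theta(N^{-0.75})$ piece, and dividing through produces the stated bound up to absorbing the $\boldsymbol\Theta(N^{-0.75})$ term into the denominator.

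The only non-routine step is the careful bookkeeping in the rearrangement: one has to make sure that, under the hypothesis $1-\delta>\tfrac{1}{2}$, the denominator $\bigl(2-\sqrt{(2/N)\log(2/\delta)}\bigr)\sqrt{\boldsymbol\nu(\Phi,p_{\sf x})}-\boldsymbol\Theta(N^{-0.75})$ is positive so that the inequality direction is preserved when dividing; this is where the assumption on $N$ being large enough (implicitly $N\gtrsim\log(2/\delta)$) enters. Other than that, the argument is a direct substitution of Theorem~\ref{theorem:sample2} into Theorem~\ref{theorem:l2_entropy}, followed by the $\lVert\cdot\rVert_\infty\le\lVert\cdot\rVert_2$ step and standard algebra — no new probabilistic or analytic machinery is required.
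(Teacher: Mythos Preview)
Your proposal is correct and follows essentially the same route as the paper: the paper's proof also combines Theorem~\ref{theorem:l2_entropy} with a one-sided Hoeffding bound (stated separately as a lemma, already phrased with $\lVert\mathbf w\rVert_2$ rather than $\lVert\mathbf w\rVert_\infty$) together with the Cantelli step via a union bound, and then rearranges. The only cosmetic difference is that the paper bakes the inequality $\lVert\mathbf w\rVert_\infty\le\lVert\mathbf w\rVert_2$ into its one-sided lemma before combining, whereas you invoke Theorem~\ref{theorem:sample2} as a black box and apply the norm inequality afterwards.
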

A full proof of Corollary~\ref{theorem:l2_entropy_sample} is included in the supplement. We see that we recover the result from Theorem~\ref{theorem:l2_entropy} as $N \rightarrow \infty$. Corollary~\ref{theorem:l2_entropy_sample} shows that as long as the diversity of features is small, and $N$ is large, the same conclusions drawn from Theorem~\ref{theorem:l2_entropy} apply in the case of the empirical mean entropy as well. We will now proceed to describing the results obtained from maximum-entropy fine-grained classification.

\section{Experiments}
\label{sec:results}
\setlength\dashlinedash{0.5pt}
\setlength\dashlinegap{1pt}
\setlength\arrayrulewidth{0.5pt}

{\begin{table*}[t]
\centering
\scriptsize
\setlength\tabcolsep{2pt}
\begin{tabular}{lcc}
\multicolumn{3}{c}{(A) CUB-200-2011 \cite{wah2011caltech}} \\ \hline\hline
Method & Top-1 & $\Delta$ \\ \hline
\multicolumn{3}{c}{\textit{Prior Work}} \\
STN\cite{jaderberg2015spatial} & {84.10} & - \\
Zhang  {et al.} \cite{zhang2016picking} & 84.50 & - \\
Lin  {et al.}~\cite{lin2017improved} & 85.80 & - \\
Cui  {et al.}~\cite{cui2017kernel} & 86.20 & - \\\hline
\multicolumn{3}{c}{\textit{Our Results}} \\
GoogLeNet & 68.19 & \multirow{2}{*}{(\textbf{6.18})} \\
\textbf{MaxEnt}-GoogLeNet & 74.37 & \\ \hdashline
ResNet-50 & 75.15 & \multirow{2}{*}{(5.22)} \\
\textbf{MaxEnt}-ResNet-50 & 80.37 &  \\\hdashline
VGGNet16 &73.28 & \multirow{2}{*}{(3.74)} \\
\textbf{MaxEnt}-VGGNet16 & 77.02 & \\ \hdashline
Bilinear CNN~\cite{lin2015bilinear} &84.10 & \multirow{2}{*}{(1.17)} \\
\textbf{MaxEnt}-BilinearCNN & 85.27 & \\ \hdashline
DenseNet-161 & 84.21 & \multirow{2}{*}{(2.33)} \\
\textbf{MaxEnt}-DenseNet-161 & \textbf{86.54} \\ \hline\\
\end{tabular}
\hspace{2pt}
\begin{tabular}{lcc}
\multicolumn{3}{c}{(B) Cars \cite{krause20133d}} \\ \hline\hline
Method & Top-1 & $\Delta$ \\ \hline
\multicolumn{3}{c}{\textit{Prior Work}} \\
Wang  {et al.} \cite{Wang_2016_CVPR} & 85.70 & - \\
Liu  {et al.} \cite{liu2016hierarchical} & 86.80 & - \\
Lin  {et al.}~\cite{lin2017improved} & 92.00 & - \\
Cui  {et al.}~\cite{cui2017kernel} & 92.40 & - \\ \hline
\multicolumn{3}{c}{\textit{Our Results}} \\
GoogLeNet & 84.85 & \multirow{2}{*}{(2.17)}\\
\textbf{MaxEnt}-GoogLeNet & 87.02 & \\\hdashline
ResNet-50 & 91.52 & \multirow{2}{*}{(2.33)}\\
\textbf{MaxEnt}-ResNet-50 & \textbf{93.85}  \\ \hdashline
VGGNet16 &80.60 & \multirow{2}{*}{(\textbf{3.28})} \\
\textbf{MaxEnt}-VGGNet16 & 83.88 \\ \hdashline
Bilinear CNN~\cite{lin2015bilinear} &91.20 & \multirow{2}{*}{(1.61)}\\
\textbf{MaxEnt}-Bilinear CNN & 92.81  \\ \hdashline
DenseNet-161 & 91.83 & \multirow{2}{*}{(1.18)} \\
\textbf{MaxEnt}-DenseNet-161 & 93.01 &  \\ \hline\\
\end{tabular}
\hspace{2pt}
\begin{tabular}{lcc}
\multicolumn{3}{c} {(C) Aircrafts \cite{maji2013fine}} \\ \hline\hline
Method & Top-1 & $\Delta$ \\ \hline
\multicolumn{3}{c}{\textit{Prior Work}} \\
Simon  {et al.} \cite{simon2017generalized} & 85.50 & -\\
Cui  {et al.}~\cite{cui2017kernel} & 86.90 & - \\
LRBP \cite{kong2016low} & 87.30 & -\\
Lin  {et al.}~\cite{lin2017improved} & 88.50 & - \\ \hline
\multicolumn{3}{c}{\textit{Our Results}} \\
GoogLeNet & 74.04 & \multirow{2}{*}{(\textbf{5.12})} \\
\textbf{MaxEnt}-GoogLeNet & 79.16 \\\hdashline
ResNet-50 & 81.19 & \multirow{2}{*}{(2.67)} \\
\textbf{MaxEnt}-ResNet-50 & 83.86  \\ \hdashline
VGGNet16 &74.17 & \multirow{2}{*}{(3.91)}\\
\textbf{MaxEnt}-VGGNet16 & 78.08 \\ \hdashline
BilinearCNN~\cite{lin2015bilinear} &84.10 & \multirow{2}{*}{(2.02)}\\
\textbf{MaxEnt}-BilinearCNN & 86.12 \\ \hdashline
DenseNet-161 &86.30 & \multirow{2}{*}{(3.46)} \\
\textbf{MaxEnt}-DenseNet-161 & \textbf{89.76} \\ \hline\\
\end{tabular}
\begin{tabular}{lcc}
\multicolumn{3}{c}{(D) NABirds \cite{van2015building}} \\ \hline\hline
Method & Top-1 & $\Delta$ \\ \hline
\multicolumn{3}{c}{\textit{Prior Work}} \\
Branson  {et al.}~\cite{branson2014bird} & 35.70 & - \\
Van  {et al.} \cite{van2015building} & 75.00 & -\\ \hline
\multicolumn{3}{c}{\textit{Our Results}} \\  
GoogLeNet & 70.66 & \multirow{2}{*}{(2.38)}\\
\textbf{MaxEnt}-GoogLeNet & 73.04 & \\ \hdashline
ResNet-50~& 63.55 & \multirow{2}{*}{(\textbf{5.66})} \\
\textbf{MaxEnt}-ResNet-50 & 69.21 & \\  \hdashline
VGGNet16 &68.34 & \multirow{2}{*}{(4.28)}\\
\textbf{MaxEnt}-VGGNet16 & 72.62 &  \\ \hdashline
BilinearCNN~\cite{lin2015bilinear} & 80.90 & \multirow{2}{*}{(1.76)} \\
\textbf{MaxEnt}-BilinearCNN & 82.66 & \\ \hdashline
DenseNet-161 & 79.35 & \multirow{2}{*}{(3.67)}\\
\textbf{MaxEnt}-DenseNet-161 & \textbf{83.02} \\ \hline
\end{tabular}
\hspace{1pt}
\begin{tabular}{lcc}
\multicolumn{3}{c}{(E) Stanford Dogs \cite{khosla2011novel}} \\ \hline\hline
Method & Top-1 & $\Delta$ \\ \hline
\multicolumn{3}{c}{\textit{Prior Work}} \\
Zhang  {et al.} \cite{zhang2016weakly} & 80.43 & - \\
Krause  {et al.} \cite{krause2016unreasonable} & 80.60 & - \\ \hline
\multicolumn{3}{c}{\textit{Our Results}} \\
GoogLeNet & 55.76 & \multirow{2}{*}{(\textbf{6.25})} \\
\textbf{MaxEnt}-GoogLeNet & 62.01 & \\\hdashline
ResNet-50 & 69.92 &  \multirow{2}{*}{(3.64)}\\
\textbf{MaxEnt}-ResNet-50 & 73.56  \\\hdashline
VGGNet16 &61.92 &  \multirow{2}{*}{(3.52)}\\
\textbf{MaxEnt}-VGGNet16 & 65.44 & \\ \hdashline
BilinearCNN~\cite{lin2015bilinear} &82.13 & \multirow{2}{*}{(1.05)}\\
\textbf{MaxEnt}-BilinearCNN & 83.18 \\  \hdashline
DenseNet-161 & 81.18 &  \multirow{2}{*}{(2.45)}\\
\textbf{MaxEnt}-DenseNet-161 & \textbf{83.63} & \\ \hline
\end{tabular}
\caption{Maximum-Entropy training (\textbf{MaxEnt}) obtains state-of-the-art performance on five widely-used fine-grained visual classification datasets (A-E). Improvement over the baseline model is reported as $(\Delta)$. All results averaged over 6 trials.\label{tab:fgvc}}
\end{table*}

We perform all experiments using the PyTorch \cite{pytorch} framework over a cluster of NVIDIA Titan X GPUs. We now describe our results on benchmark datasets in fine-grained recognition and some ablation studies.
\subsection{Fine-Grained Visual Classification}
Maximum-Entropy training improves performance across five standard fine-grained datasets, with substantial gains in low-performing models.  We obtain state-of-the-art results on all five datasets (Table~\ref{tab:fgvc}-(A-E)). Since all these datasets are small, we report numbers averaged over 6 trials.


\textbf{Classification Accuracy:} First, we observe that Maximum-Entropy training obtains significant performance gains when fine-tuning from models trained on the ImageNet dataset (e.g., GoogLeNet \cite{szegedy2015going}, Resnet-50 \cite{he2016deep}). For example, on the CUB-200-2011 dataset, fine-tuning GoogLeNet by standard fine-tuning gives an accuracy of 68.19\%. Fine-tuning with Maximum-Entropy gives an accuracy of \textbf{74.37}\%---which is a large improvement, and it is persistent across datasets. Since a lot of fine-tuning tasks use general base models such as GoogLeNet and ResNet, this result is relevant to the large number of applications that involve fine-tuning on specialized datasets.

Maximum-Entropy classification also improves prediction performance for CNN architectures specifically designed for fine-grained visual classification. For instance, it improves the performance of the Bilinear CNN~\cite{lin2015bilinear} on all 5 datasets and obtains state-of-the-art results, to the best of our knowledge. The gains are smaller, since these architectures improve diversity in the features by localization, and hence maximizing entropy is less crucial in this case. However, it is important to note that most pooling architectures~\cite{lin2015bilinear} use a large model as a base-model (such as VGGNet~\cite{simonyan2014very}) and have an expensive pooling operation. Thus they are computationally very expensive, and infeasible for tasks that have resource constraints in terms of data and computation time.

\textbf{Increase in Generality of Features:} We hypothesize that Maximum-Entropy training will encourage the classifier to reduce the specificity of the features. To evaluate this hypothesis, we perform the eigendecomposition of the covariance matrix on the \texttt{pool5} layer features of GoogLeNet trained on CUB-200-2011, and analyze the trend of sorted eigenvalues (Figure~\ref{fig:pca_plot}). We examine the features from CNNs with (i) no fine-tuning (“Basic”), (ii) regular fine-tuning, and (iii) fine-tuning with Maximum-Entropy.

For a feature matrix with large covariance between the features of different classes, we would expect the first few eigenvalues to be large, and the rest to diminish quickly, since fewer orthogonal components can summarize the data. Conversely, in a completely uncorrelated feature matrix, we would see a longer tail in the decreasing magnitudes of eigenvalues. Figure~\ref{fig:pca_plot} shows that for the Basic features (with no fine-tuning), there is a fat tail in both training and test sets due to the presence of a large number of uncorrelated features. After fine-tuning on the training data, we observe a reduction in the tail of the curve, implying that some generality in features has been introduced in the model through the fine-tuning. The test curve follows a similar decrease, justifying the increase
in test accuracy. Finally, for Maximum-Entropy, we observe a substantial decrease in the width of the tail of eigenvalue magnitudes, suggesting a larger increase in generality of features in both training and test sets, which confirms our hypothesis.

\textbf{Effect on Prediction Probabilities:} For Maximum-Entropy training, the predicted logit vector is smoother, leading to a higher cross entropy during both training and validation. We observe that the average value of the logit probability of the top predicted class decreases significantly with Maximum-Entropy, as predicted by the mathematical formulation (for $\gamma=1$). On CUB-200-2011 dataset for GoogLeNet architecture, with Maximum-Entropy, the mean probability of the top class is 0.34, as  compared to 0.77 without it. Moreover, the tail of probability values is fatter with Maximum-Entropy, as depicted in Figure~\ref{fig:logits}.

\begin{figure}[th]
\centering
\small
\begin{subfigure}{.48\textwidth}
  \centering
  \includegraphics[width=\linewidth]{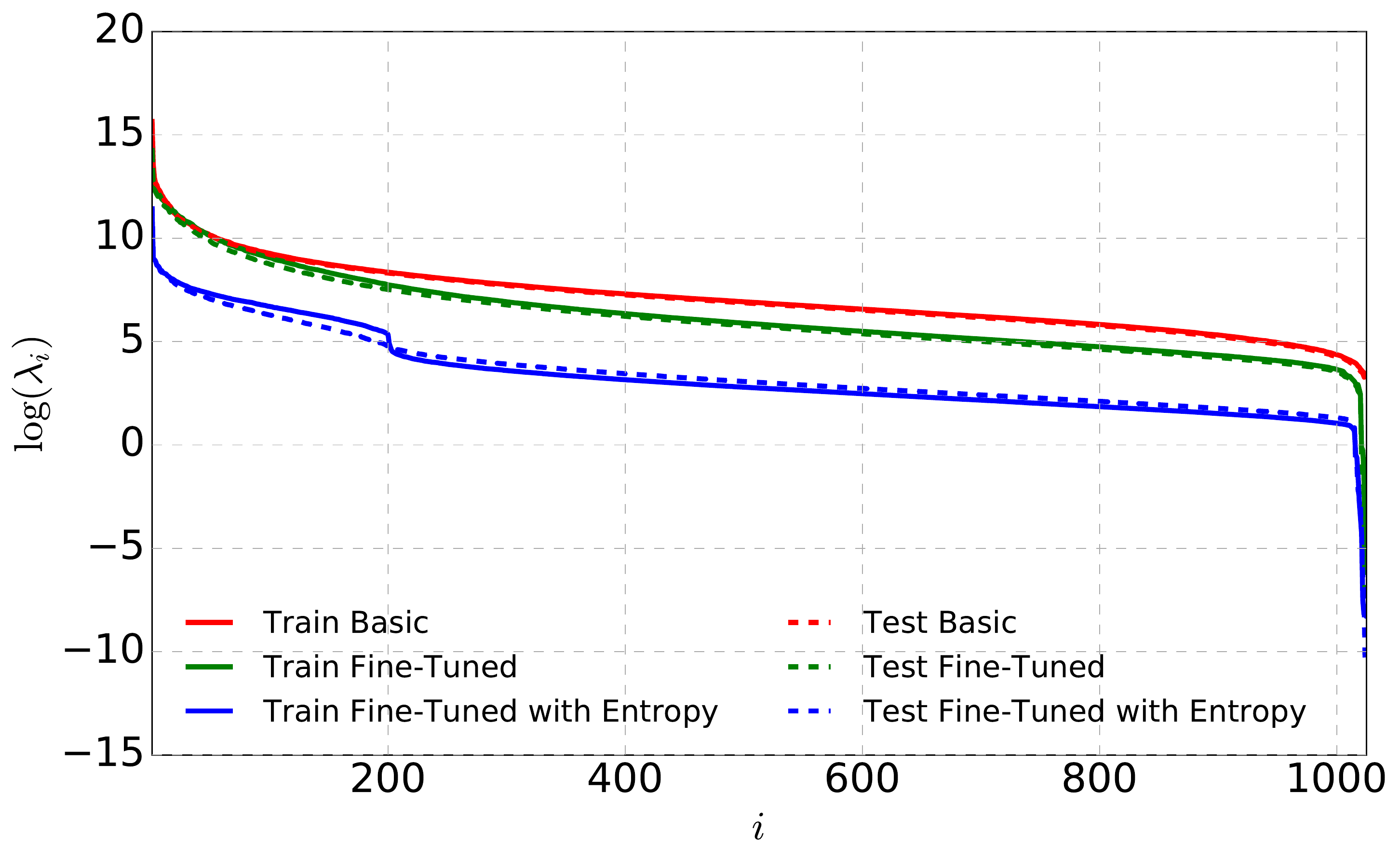}
  \caption{}
  \label{fig:pca_plot}
\end{subfigure}
\begin{subfigure}{.48\textwidth}
  \centering
  \includegraphics[width=\linewidth]{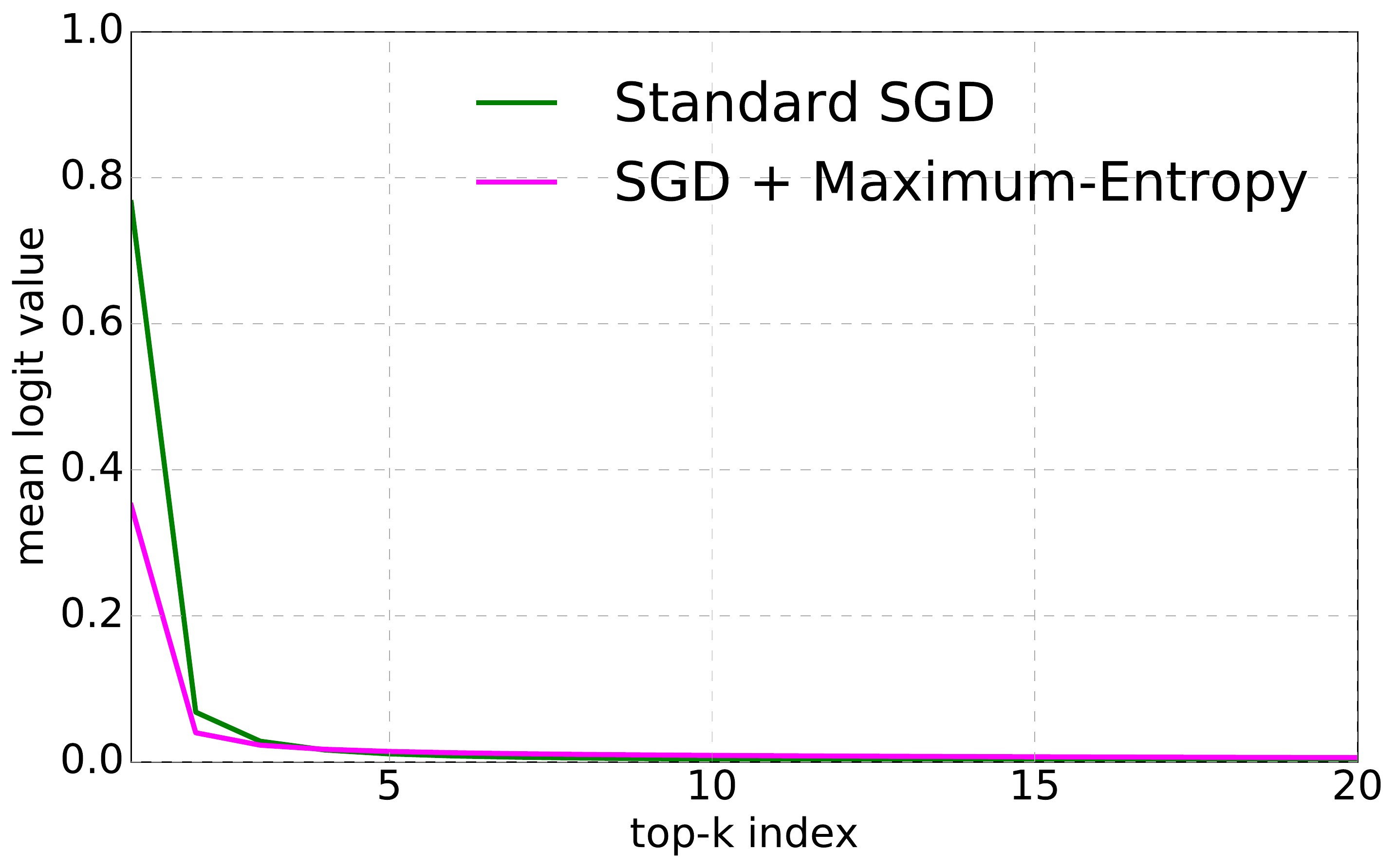}
  \caption{}
  \label{fig:logits}
\end{subfigure}
\caption{\small(a) Maximum-Entropy training encourages the network to reduce the specificity of the features, which is reflected in the longer tail  of eigenvalues for the covariance matrix of \texttt{pool5} GoogLeNet features for both training and test sets of CUB-200-2011. We plot the value of $\log(\lambda_i)$ for the $i$th eigenvalue $\lambda_i$ obtained after decomposition of test set (dashed) and training set (solid) (for $\gamma = 1)$. (b) For Maximum-Entropy training, the predicted logit vector is smoother with a fatter tail (GoogleNet on CUB-200-2011). }
\end{figure}

\begin{table}[t]
\centering
\begin{tabular}{lcccc}
\hline
\hline
Method          & CIFAR-10  &$\Delta$&  CIFAR-100   &$\Delta$  \\ \hline
GoogLeNet         & 84.16   & \multirow{2}{*}{({-0.06})}   & 70.24   &\multirow{2}{*}{(\textbf{3.26})}  \\
\textbf{MaxEnt} + GoogLeNet      & 84.10   & & \textbf{73.50}   & \\ \hline
DenseNet-121      & 92.19 & \multirow{2}{*}{({0.03})}    & 75.01   & \multirow{2}{*}{(\textbf{1.21})} \\
\textbf{MaxEnt} + DenseNet-121    & 92.22   &   & \textbf{76.22}   &  \\
\hline
\end{tabular}
\vspace{5pt}
\caption{Maximum Entropy obtains larger gains on the finer CIFAR-100 dataset as compared to CIFAR-10. Improvement over the baseline model is reported as $(\Delta)$.} \label{tab:CIFAR-10_large}
\vspace{-20pt}
\end{table}

\begin{table}
\centering
\begin{tabular}{lcccc}
\hline
\hline
Method          & Random-ImageNet  &$\Delta$& Dogs-ImageNet&$\Delta$ \\ \hline
GoogLeNet         & 71.85   & \multirow{2}{*}{({0.35})} & 62.28 & \multirow{2}{*}{(\textbf{2.63})}\\
\textbf{MaxEnt} + GoogLeNet   & {72.20}   && {64.91} & \\ \hline
ResNet-50      & 82.01 & \multirow{2}{*}{({0.28})}  & 73.81 & \multirow{2}{*}{(\textbf{1.86})}\\
\textbf{MaxEnt} + ResNet-50  & {82.29} &   & {75.66} & \\
\hline
\end{tabular}
\vspace{5pt}
\caption{Maximum Entropy obtains larger gains on the a subset of ImageNet containing dog sub-classes versus a randomly chosen subset of the same size which has higher visual diversity. Improvement over the baseline model (in cross-validation) is reported as $(\Delta)$.}\label{tab:imagenet_dogs}
\end{table}

\begin{table}
  \centering
  \small
  \scalebox{1}{\begin{tabular}{c|c|ccccc}
    \hline \hline
    \multicolumn{2}{c}{\textbf{Method}} & CUB-200-2011 & Cars & Aircrafts & NABirds & Stanford Dogs \\ \hline
    \multirow{2}{*}{VGG-Net16} & \textbf{MaxEnt} & 77.02 & 83.88 & 78.08 & 72.62  & 65.44 \\ 
     & LSR & 70.03 & 81.45 & 75.06 & 69.28 & 63.06 \\ \hline
     \multirow{2}{*}{ResNet-50} & \textbf{MaxEnt} & 80.37 & 93.85 & 83.86 & 69.21 & 73.56 \\ 
      & LSR & 78.20 & 92.04 & 81.26 & 64.02 &  70.03 \\ \hline
      \multirow{2}{*}{DenseNet-161} & \textbf{MaxEnt} & 86.54 & 93.01 & 89.76 & 83.02 & 83.63 \\ 
       & LSR & 84.86 & 91.96 & 87.05 & 80.11 & 82.98 \\ \hline
  \end{tabular}}
  \vspace{5pt}
  \caption{Maximum-Entropy training obtains much large gains on Fine-grained Visual Classification as compared to Label Smoothing Regularization (LSR)~\cite{szegedy2015going}.}
  \label{tab:sup2}
\end{table}

\subsection{Ablation Studies}
\textbf{CIFAR-10 and CIFAR-100:} We evaluate Maximum-Entropy on the CIFAR-10 and CIFAR-100 datasets~\cite{krizhevsky2014cifar}.
 CIFAR-100 has the same set of images as CIFAR-10 but with finer category distinction in the labels, with each  ``superclass'' of 20 containing five finer divisions, and a 100 categories in total. Therefore, we expect (and observe) that Maximum-Entropy training provides stronger gains on CIFAR-100 as compared to CIFAR-10 across models (Table~\ref{tab:CIFAR-10_large}).

\textbf{ImageNet Ablation Experiment:} To understand the effect of Maximum-Entropy training on datasets with more samples compared to the small fine-grained datasets, we create two synthetic datasets: (i) Random-ImageNet, which is formed by selecting 116K images from a random subset of 117 classes of ImageNet~\cite{imagenet_cvpr09}, and (ii) Dogs-ImageNet, which is formed by selecting all classes from ImageNet that have dogs as labels, which has the same number of images and classes as Random-ImageNet. Dogs-ImageNet has less diversity compared to Random-ImageNet, and thus we expect the gains from Maximum-Entropy to be higher. On a 5-way cross-validation on both dataset, we observe higher gains on the Dogs-ImageNet dataset for two CNN models (Table~\ref{tab:imagenet_dogs}).


\textbf{Choice of Hyperparameter $\gamma$:} An integral component of regularization is the choice of weighing parameter. We find that performance is fairly robust to the choice of $\gamma$ (Figure~\ref{fig:lambda_plot_coarse}). Please see supplement for experiment-wise details.

\begin{figure}[t]
\centering
\small
\begin{subfigure}{.32\textwidth}
\centering
\includegraphics[width=\linewidth]{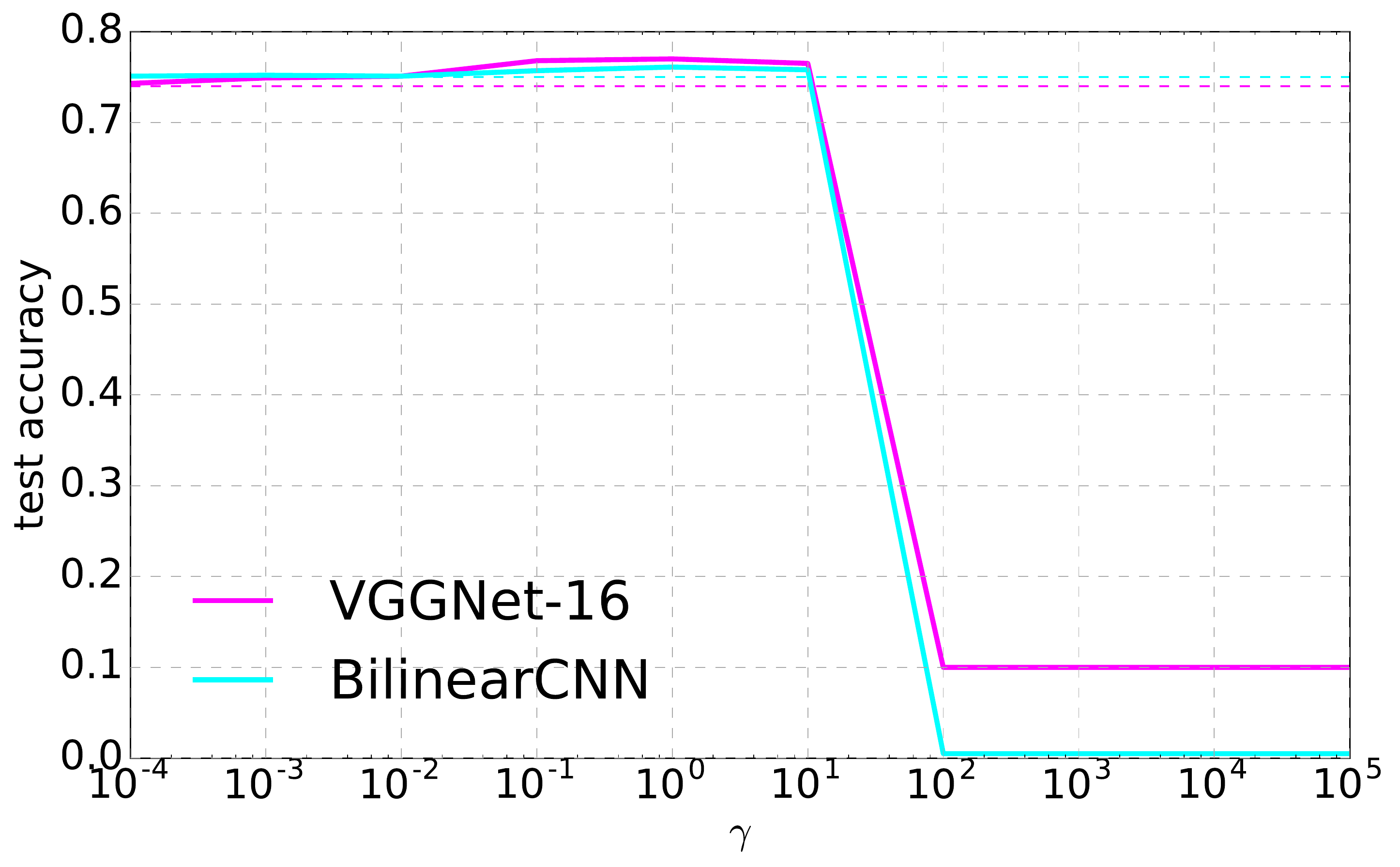}
\caption{}
\label{fig:lambda_plot_coarse}
\end{subfigure}
\begin{subfigure}{.32\textwidth}
\centering
\includegraphics[width=\linewidth]{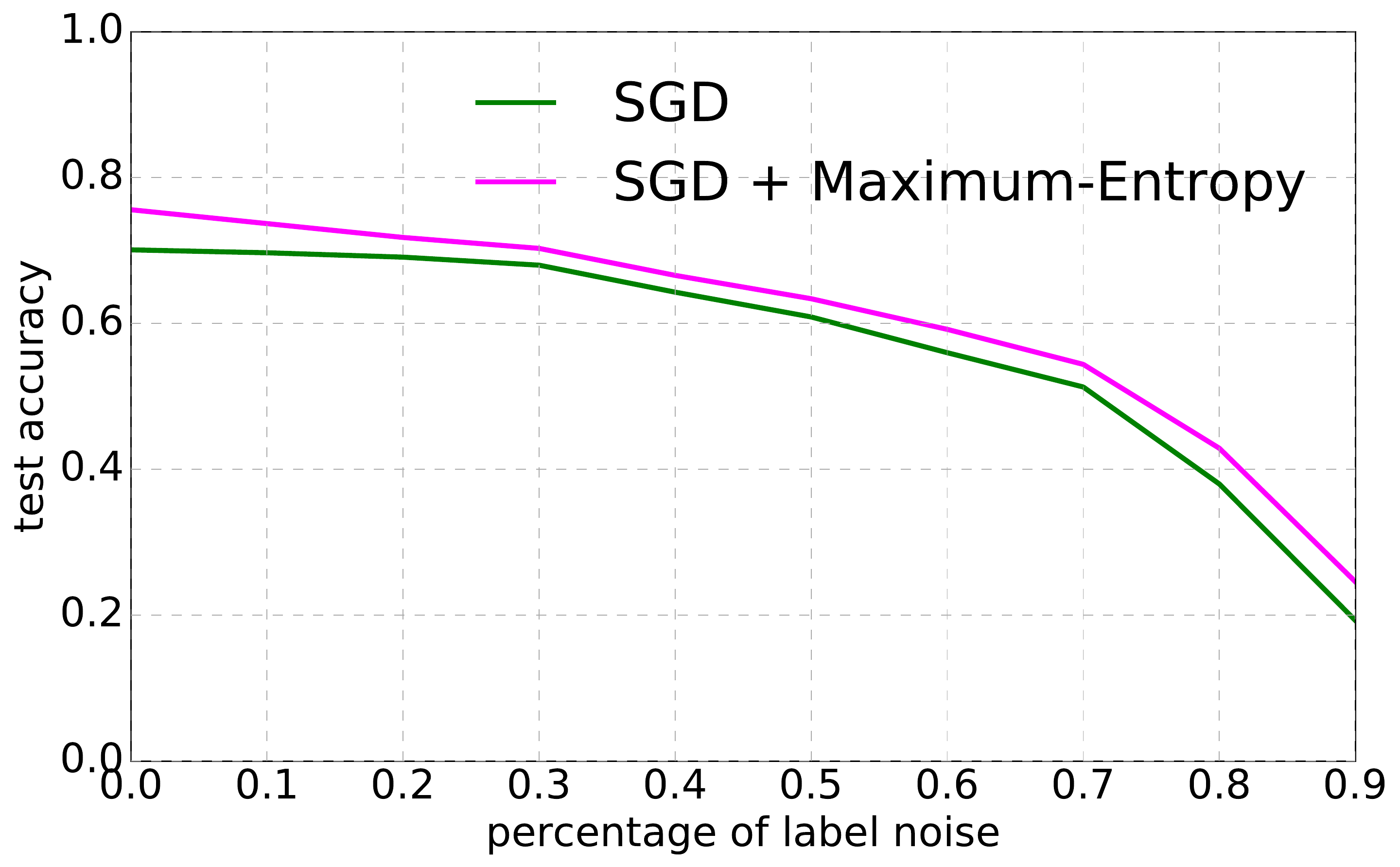}
\caption{}
\label{fig:noise_cub}
\end{subfigure}
\begin{subfigure}{.32\textwidth}
  \centering
  \includegraphics[width=\linewidth]{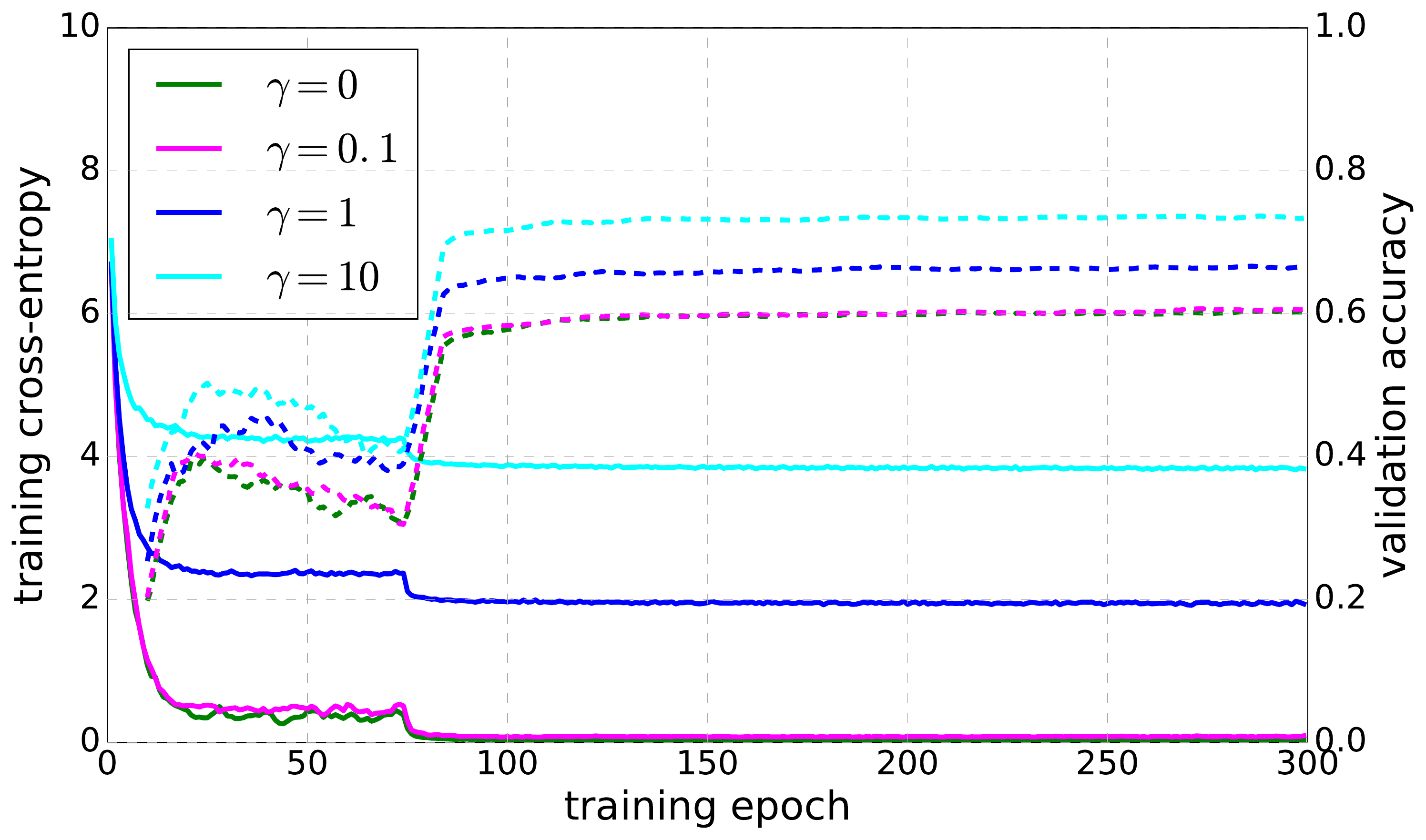}
  \caption{}
  \label{fig:ce_plot}
\end{subfigure}
\caption{\small(a) Classification performance is robust to the choice of $\gamma$ over a large region as shown here for CUB-200-2011 with models VGGNet-16 and BilinearCNN.  (b) Maximum-Entropy is more robust to increasing amounts of label noise (CUB-200-2011 on GoogleNet with $\gamma = 1$). (c) Maximum-Entropy obtains higher validation performance despite higher training cross-entropy loss.}
\end{figure}

\textbf{Robustness to Label Noise:} In this experiment, we gradually introduce label noise by randomly permuting a fraction of labels for increasing fractions of total data. We follow an identical evaluation protocol as the previous experiment, and observe that Maximum-Entropy is more robust to label noise (Figure \ref{fig:noise_cub}).

\textbf{Training Cross-Entropy and Validation Accuracy:} We expect Maximum-Entropy training to provide higher accuracy at the cost of higher training cross-entropy. In Figure~\ref{fig:ce_plot}, we show that we achieve a higher validation accuracy when training with Maximum-Entropy despite the training cross-entropy loss converging to a higher value.

\textbf{Comparison with Label-Smoothing Regularization:} Label-Smoothing Regularization~\cite{szegedy2015going} penalizes the KL-divergence of the classifier logits from the uniform distribution -- and is also a method to prevent peaky distributions. On comparing performance with Label-Smoothing Regularization, we found that Maximum-Entropy provides much larger gains on fine-grained recognition (see Table~\ref{tab:sup2}).

\section{Discussion and Conclusion}
Many real-world applications of computer vision models involve extensive fine-tuning on small, relatively imbalanced datasets with much smaller diversity in the training set compared to the large-scale models they are fine-tuned from, a notable example of which is fine-grained recognition. In this domain, Maximum-Entropy training provides an easy-to-implement and simple to understand training schedule that consistently improves performance. There are several extensions, however, that can be explored: explicitly enforcing a large diversity in the features through a different regularizer might be an interesting extension to this study, as well as potential extensions to large-scale problems by tackling clusters of diverse objects separately. We leave these as a future study with our results as a starting point.

\textbf{Acknowledgements:} We thank Ryan Farrell, Pei Guo, Xavier Boix, Dhaval Adjodah, Spandan Madan, and Ishaan Grover for their feedback on the project and Google's TensorFlow Research Cloud Program for providing TPU computing resources.
\section*{Appendix 1: Preliminaries}
\subsection*{Probabilistic Tail Bounds}
\begin{theorem}[Hoeffding's Inequality (Theorem 2.8 of \cite{BLP12})] \label{thm:Hoeffdings}
Let $X_1, \ldots,  X_n$ be independent random variables such that $X_i$ takes its values in $[a_i,b_i]$ almost surely for all $i\leq n.$ Let
$$S=\sum_{i=1}^n\left(X_i-\mathbb{E}\left[X_i\right]\right),$$
then for every $t>0,$
$$\Pr\left(S\geq t\right)\leq\exp\left(-\frac{2n^2t^2}{\sum_{i=1}^n(a_i-b_i)^2}\right).$$
\end{theorem}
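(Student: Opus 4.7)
The plan is to prove Hoeffding's inequality by the standard Chernoff bounding technique, supported by Hoeffding's lemma on the moment generating function of a bounded centered random variable.

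First, I would apply the exponential Markov inequality: for any parameter $s>0$,
\[\Pr(S\geq t) \;=\; \Pr\bigl(e^{sS}\geq e^{st}\bigr) \;\leq\; e^{-st}\,\mathbb{E}\bigl[e^{sS}\bigr].\]
Writing $Y_i := X_i - \mathbb{E}[X_i]$, each $Y_i$ is mean-zero and takes values in an interval of length $b_i - a_i$. Independence of the $X_i$ (hence of the $Y_i$) factors the MGF as $\mathbb{E}[e^{sS}] = \prod_{i=1}^n \mathbb{E}[e^{sY_i}]$, so the whole task reduces to controlling a single factor.

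Second, I would establish Hoeffding's lemma as the key technical ingredient: if $Y$ is mean-zero with $Y \in [\alpha,\beta]$ almost surely, then $\mathbb{E}[e^{sY}] \leq \exp\!\bigl(s^2(\beta-\alpha)^2/8\bigr)$. The proof uses convexity of $x \mapsto e^{sx}$ on $[\alpha,\beta]$, giving
\[e^{sy} \;\leq\; \frac{\beta-y}{\beta-\alpha}e^{s\alpha} + \frac{y-\alpha}{\beta-\alpha}e^{s\beta}.\]
Taking expectations and using $\mathbb{E}[Y]=0$ eliminates the linear terms; then substituting $p := -\alpha/(\beta-\alpha)$ and $u := s(\beta-\alpha)$ rewrites the bound as $e^{\phi(u)}$, where $\phi(u) = -pu + \log\!\bigl(1 - p + pe^u\bigr)$. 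A direct computation gives $\phi(0) = \phi'(0) = 0$ and $\phi''(u) = q(u)\bigl(1 - q(u)\bigr)$ with $q(u) = pe^u/(1 - p + pe^u) \in [0,1]$. Since the variance of a Bernoulli is at most $1/4$, we have $\phi''(u) \leq 1/4$, and Taylor's theorem then yields $\phi(u) \leq u^2/8$, i.e.\ $\mathbb{E}[e^{sY}] \leq e^{s^2(\beta-\alpha)^2/8}$.

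Third, I would combine the pieces and optimize. Plugging the lemma into the product bound gives
\[\Pr(S\geq t) \;\leq\; \exp\!\Bigl(-st + \tfrac{s^2}{8}\sum_{i=1}^n (b_i - a_i)^2\Bigr).\]
Minimizing the exponent over $s > 0$ gives $s^\star = 4t / \sum_{i=1}^n (b_i - a_i)^2$, and substituting yields the claimed bound $\exp\!\bigl(-2t^2/\sum_{i=1}^n (a_i - b_i)^2\bigr)$ (the squaring makes the sign in the interval labels irrelevant).

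The main obstacle is the proof of Hoeffding's lemma, specifically recognizing that $\phi''(u)$ is the variance of a Bernoulli random variable with parameter $q(u)$ and invoking the $1/4$ bound on the variance of any $[0,1]$-valued variable. Aside from this, the argument is the routine Chernoff optimization, and factorization of the MGF via independence is standard.
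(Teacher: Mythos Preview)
The paper does not prove this statement; it is quoted as a preliminary result from \cite{BLP12} and simply invoked later. So there is no ``paper's own proof'' to compare against. Your argument is the standard Chernoff--Hoeffding proof and is correct.

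One discrepancy worth flagging: the bound you actually derive is
\[
\Pr(S\geq t)\;\le\;\exp\!\Bigl(-\frac{2t^2}{\sum_{i=1}^n(b_i-a_i)^2}\Bigr),
\]
which is the correct inequality for $S=\sum_i(X_i-\mathbb{E}[X_i])$. The paper's displayed statement carries an extra factor of $n^2$ in the numerator. That $n^2$ is what one obtains for the \emph{sample average} $S/n$, not for the sum $S$ as defined; indeed, when the paper later applies Hoeffding's inequality (in the proof of the lemma bounding the empirical entropy), it is applying it to an empirical mean, and the $N^2$ there is consistent with the average formulation. So the $n^2$ in the stated theorem is a mismatch with the definition of $S$, and your final line ``yields the claimed bound'' is not literally accurate: you have proved the correct version, which differs from the paper's display by this $n^2$. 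Either redefine $S$ as the average, or drop the $n^2$ from the exponent; your proof supports the latter.
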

\begin{theorem}[Cantelli's Inequality (Equation 7 of \cite{mallows1969inequalities})]
  The inequality states that

  $${\displaystyle \Pr(X-\mu \geq \lambda )\quad {\begin{cases}\leq {\frac{\sigma ^{2}}{\sigma ^{2}+\lambda ^{2}}}&{\text{if }}\lambda >0,\\[8pt]\geq 1-{\frac{\sigma ^{2}}{\sigma ^{2}+\lambda ^{2}}}&{\text{if }}\lambda <0.\end{cases}}}$$
  where $X$ is a real-valued random variable,
  $\Pr$ is the probability measure,
  $\mu$  is the expected value of $X$,
  $\sigma ^{2}$ is the variance of $X$.
\end{theorem}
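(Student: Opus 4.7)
The plan is to follow the classical one-sided-Chebyshev argument, which reduces Cantelli's inequality to Markov's inequality after a clever shift. First I would recenter by setting $Y = X - \mu$, so that $\mathbb{E}[Y] = 0$ and $\mathrm{Var}(Y) = \sigma^2$; the claim becomes $\Pr(Y \geq \lambda) \leq \sigma^2/(\sigma^2 + \lambda^2)$ for $\lambda > 0$. The key trick is to observe that for any $u \geq 0$, the event $\{Y \geq \lambda\}$ is contained in $\{Y + u \geq \lambda + u\}$, and since $\lambda + u > 0$ we may square both sides to obtain the inclusion $\{Y \geq \lambda\} \subseteq \{(Y+u)^2 \geq (\lambda + u)^2\}$. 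This is the step that converts a one-sided tail into a two-sided (nonnegative) event, which is what makes Markov's inequality useful here.

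Next I would apply Markov's inequality to the nonnegative random variable $(Y+u)^2$, giving $\Pr(Y \geq \lambda) \leq \mathbb{E}[(Y+u)^2]/(\lambda+u)^2 = (\sigma^2 + u^2)/(\lambda + u)^2$, where I used $\mathbb{E}[Y] = 0$. Since this bound holds for every $u \geq 0$, I would minimize the right-hand side over $u$. Differentiating and setting the derivative to zero yields the critical point $u^\star = \sigma^2/\lambda$ (which is nonnegative since $\lambda > 0$). Substituting back and simplifying produces $(\sigma^2 + \sigma^4/\lambda^2)/(\lambda + \sigma^2/\lambda)^2 = \sigma^2/(\sigma^2 + \lambda^2)$, which is exactly the claimed upper bound.

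For the case $\lambda < 0$, I would use a symmetry argument: apply the just-established upper bound to the random variable $-X$, whose mean is $-\mu$ and whose variance is still $\sigma^2$. Since $-\lambda > 0$, we obtain $\Pr(-X - (-\mu) \geq -\lambda) \leq \sigma^2/(\sigma^2 + \lambda^2)$, i.e.\ $\Pr(X - \mu \leq \lambda) \leq \sigma^2/(\sigma^2 + \lambda^2)$. Taking complements yields $\Pr(X - \mu \geq \lambda) \geq 1 - \sigma^2/(\sigma^2 + \lambda^2)$, completing the proof.

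The only genuinely nontrivial part is seeing the shift-and-square trick in the first place; once that move is in hand, the rest is routine calculus and a sign-flip symmetry. I would expect the cleanest presentation to emphasize that the bound is tight (attained by two-point distributions), though formally this is not required for the inequality itself.
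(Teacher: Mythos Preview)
Your proof is correct and is the standard shift-and-square argument for Cantelli's inequality. However, the paper does not actually prove this statement: it appears in the ``Probabilistic Tail Bounds'' section of Appendix~1 as a cited preliminary result (attributed to Mallows~\cite{mallows1969inequalities}), alongside Hoeffding's inequality, and is simply invoked later (in Lemma~\ref{lemma:cantelli_norm}) without proof. So there is no proof in the paper to compare against; you have supplied one where the authors intentionally did not.

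One very minor quibble in your $\lambda < 0$ case: the complement of $\{X-\mu \leq \lambda\}$ is $\{X-\mu > \lambda\}$, not $\{X-\mu \geq \lambda\}$, so ``taking complements'' literally gives $\Pr(X-\mu > \lambda) \geq 1 - \sigma^2/(\sigma^2+\lambda^2)$. The desired inequality then follows from $\Pr(X-\mu \geq \lambda) \geq \Pr(X-\mu > \lambda)$, which you should state explicitly.
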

\subsection*{Basic Derivations for Multivariate Gaussian Mixtures}
\begin{lemma} For $N$ vectors ${\bf x}_1, ..., {\bf x}_N, \ {\bf x}_i \in \mathbb R^m \ \forall i$, $N$ constants $\alpha_1, ..., \alpha_N, \ \alpha_i > 0 \ \forall i, \ \sum_{i=1}^N \alpha_i  = 1$ and target vector ${\bf y} \in \mathbb R^m$,
\begin{equation*}
  \sum_{i=1}^N \alpha_i {\bf x}_i^\top {\bf y} \leq \left(\max_i \lVert {\bf x_i} \rVert_2\right)\cdot \lVert {\bf y} \rVert_2
\end{equation*}
\label{lemma:vec_sum}
\end{lemma}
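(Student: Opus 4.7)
The plan is to combine the Cauchy–Schwarz inequality with the fact that the $\alpha_i$ form a convex combination. The statement is essentially the observation that bounding each inner product by the norm product, and then bounding the weighted average of the individual norms by the maximum norm, yields the claim.

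First I would apply Cauchy–Schwarz to each individual inner product, giving ${\bf x}_i^\top {\bf y} \le \lVert {\bf x}_i \rVert_2 \cdot \lVert {\bf y} \rVert_2$ for every $i$. Since each $\alpha_i > 0$, multiplying this inequality by $\alpha_i$ preserves the direction and summing over $i$ yields
\[
\sum_{i=1}^N \alpha_i {\bf x}_i^\top {\bf y} \;\le\; \lVert {\bf y} \rVert_2 \sum_{i=1}^N \alpha_i \lVert {\bf x}_i \rVert_2.
\]

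Next I would bound each $\lVert {\bf x}_i \rVert_2$ by $\max_j \lVert {\bf x}_j \rVert_2$, which is a constant independent of $i$, and pull it outside the sum. Using the normalization $\sum_{i=1}^N \alpha_i = 1$, the remaining factor $\sum_i \alpha_i$ collapses to $1$, leaving
\[
\sum_{i=1}^N \alpha_i {\bf x}_i^\top {\bf y} \;\le\; \Bigl(\max_i \lVert {\bf x}_i \rVert_2\Bigr) \cdot \lVert {\bf y} \rVert_2,
\]
which is exactly the claim.

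There is no real obstacle here: the lemma is an elementary two-line consequence of Cauchy–Schwarz plus convexity of the max over a probability simplex. The only thing worth stating explicitly is that positivity of the $\alpha_i$ is required so that the inequalities are preserved when multiplying through (without it, one would need absolute values on the inner products). I would expect the actual write-up in the supplement to be no more than a display or two.
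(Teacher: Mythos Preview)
Your proposal is correct and follows essentially the same route as the paper's proof: Cauchy--Schwarz on each term, then bound the convex combination of norms by the maximum norm using $\sum_i \alpha_i = 1$. Your remark about the positivity of the $\alpha_i$ is also appropriate, and the paper's write-up is indeed just a couple of displays.
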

\begin{proof}
  For each vector ${\bf x}_i$, we know by the Cauchy-Schwarz Inequality that:
\begin{equation}
  {\bf x}_i^\top {\bf y} \leq \lVert {\bf x}_i \rVert_2 \cdot \lVert {\bf y} \rVert_2
\end{equation}
  And:
\begin{equation}
  \lVert {\bf x}_k \rVert_2 \leq \max_i \lVert {\bf x_i} \rVert_2 \ \forall k
\end{equation}
  Combining the above, we have:
\begin{equation}
  \sum_{i=1}^N \alpha_i {\bf x}_i^\top {\bf y} \leq \sum_{i=1}^N \alpha_i \lVert {\bf x}_i \rVert_2 \cdot \lVert {\bf y} \rVert_2 \leq (\sum_{i=1}^N \alpha_i)\left(\max_i \lVert {\bf x_i} \rVert_2\right)\cdot \lVert {\bf y} \rVert_2 = \left(\max_i \lVert {\bf x_i} \rVert_2\right)\cdot \lVert {\bf y} \rVert_2
\end{equation}
\end{proof}
\begin{lemma} For $N$ vectors ${\bf x}_1, ..., {\bf x}_N, \ {\bf x}_i \in \mathbb R^m \ \forall i$, and target vector ${\bf y} \in \mathbb R^m$,
\begin{equation*}
  \sum_{i=1}^N {\bf x}_i^\top {\bf y} \geq -N \left(\max_i \lVert {\bf x_i} \rVert_2\right)\cdot \lVert {\bf y} \rVert_2
\end{equation*}
\label{lemma:vec_sum_2}
\end{lemma}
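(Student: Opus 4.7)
The plan is to mirror the proof of Lemma~\ref{lemma:vec_sum}, but use Cauchy--Schwarz in its lower-bound form instead of its upper-bound form, and replace the convex combination (which absorbs $\sum_i \alpha_i = 1$) by a plain sum (which produces the factor of $N$).

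First, for each index $i$, I would apply Cauchy--Schwarz to ${\bf x}_i^\top {\bf y}$ in the direction that gives a lower bound, namely
\[
  {\bf x}_i^\top {\bf y} \;\geq\; -\lVert {\bf x}_i \rVert_2 \cdot \lVert {\bf y} \rVert_2,
\]
which follows from the standard $|{\bf x}_i^\top {\bf y}| \leq \lVert {\bf x}_i \rVert_2 \lVert {\bf y} \rVert_2$. Next, I would bound each norm by the maximum, $\lVert {\bf x}_i \rVert_2 \leq \max_j \lVert {\bf x}_j \rVert_2$, which is valid for every $i$ since the maximum dominates any particular entry.

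Finally, summing these $N$ inequalities and pulling out the common factor $\max_j \lVert {\bf x}_j \rVert_2 \cdot \lVert {\bf y} \rVert_2$ gives
\[
  \sum_{i=1}^N {\bf x}_i^\top {\bf y} \;\geq\; -\sum_{i=1}^N \lVert {\bf x}_i \rVert_2 \cdot \lVert {\bf y} \rVert_2 \;\geq\; -N \Bigl(\max_i \lVert {\bf x}_i \rVert_2\Bigr)\cdot \lVert {\bf y} \rVert_2,
\]
which is the claimed inequality. There is no genuine obstacle here: the statement is a two-line consequence of Cauchy--Schwarz, and the only thing to be careful about is the sign (we are lower-bounding a possibly negative quantity, so we use $-\lVert {\bf x}_i \rVert_2 \lVert {\bf y} \rVert_2$ rather than its positive counterpart) and the absence of the weights $\alpha_i$ that forced the factor of $N$ in place of $1$.
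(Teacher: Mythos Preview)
Your proposal is correct and follows essentially the same approach as the paper: apply Cauchy--Schwarz in its lower-bound form ${\bf x}_i^\top {\bf y} \geq -\lVert {\bf x}_i\rVert_2\lVert {\bf y}\rVert_2$, replace each $\lVert {\bf x}_i\rVert_2$ by $\max_j \lVert {\bf x}_j\rVert_2$, and sum over $i$ to pick up the factor $N$. The paper merely writes the first step as Cauchy--Schwarz applied to $-{\bf x}_i^\top {\bf y}$ followed by a sign flip, which is the same thing.
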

\begin{proof}
  For each vector ${\bf x}_i$, we know by the Cauchy-Schwarz Inequality that:
\begin{align}
  -{\bf x}_i^\top {\bf y} &\leq \lVert -{\bf x}_i \rVert_2 \cdot \lVert {\bf y} \rVert_2 \\
  &= \lVert {\bf x}_i \rVert_2 \cdot \lVert {\bf y} \rVert_2 \\ \intertext{Multiplying the above equation by $-1$, we have:}
  {\bf x}_i^\top {\bf y} &\geq - \lVert {\bf x}_i \rVert_2 \cdot \lVert {\bf y} \rVert_2
\end{align}
  And:
\begin{align}
  \lVert {\bf x}_k \rVert_2 &\leq \max_i \lVert {\bf x_i} \rVert_2 \ \forall k \\ \intertext{Multiplying the above equation by $-1$, we have:}
  -\lVert {\bf x}_k \rVert_2 &\geq - \max_i \lVert {\bf x_i} \rVert_2 \ \forall k
\end{align}
  Combining the above, we have:
\begin{equation}
  \sum_{i=1}^N {\bf x}_i^\top {\bf y} \geq - \sum_{i=1}^N \lVert {\bf x}_i \rVert_2 \cdot \lVert {\bf y} \rVert_2 \geq -N \left(\max_i \lVert {\bf x_i} \rVert_2\right)\cdot \lVert {\bf y} \rVert_2
\end{equation}
\end{proof}
\begin{lemma}
  For an $n$-dimensional multivariate normal distribution $X \sim \mathcal N({\bf \mu}, {\bf \Sigma})$, we have:
  \begin{equation*}
    \mathbb E[\lVert X \rVert_2^2] = {\sf tr}({\bf \Sigma}) + \lVert {\bf \mu} \rVert_2^2
  \end{equation*}
  \label{lemma:gaussian_norm}
\end{lemma}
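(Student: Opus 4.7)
The plan is to note that this identity holds for any square-integrable random vector in $\mathbb{R}^n$, so Gaussianity is not actually needed. I would therefore proceed by a direct component-wise calculation, which is the cleanest route and avoids any computation with multivariate Gaussian densities.

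Writing $X = (X_1, \ldots, X_n)^\top$ with $\mathbb{E}[X_i] = \mu_i$ and $\text{Var}(X_i) = \Sigma_{ii}$, I would first expand
\[
\lVert X \rVert_2^2 = \sum_{i=1}^n X_i^2.
\]
By linearity of expectation, $\mathbb{E}[\lVert X \rVert_2^2] = \sum_{i=1}^n \mathbb{E}[X_i^2]$. Then I would apply the standard identity $\mathbb{E}[X_i^2] = \mathrm{Var}(X_i) + (\mathbb{E}[X_i])^2 = \Sigma_{ii} + \mu_i^2$ for each coordinate, and sum to obtain $\sum_i \Sigma_{ii} + \sum_i \mu_i^2 = {\sf tr}({\bf \Sigma}) + \lVert {\bm\mu}\rVert_2^2$.

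Alternatively, a basis-free version of the same argument uses $\lVert X \rVert_2^2 = {\sf tr}(XX^\top)$, exchanging trace and expectation to get ${\sf tr}(\mathbb{E}[XX^\top])$, and then applying $\mathbb{E}[XX^\top] = {\bf \Sigma} + {\bm \mu}{\bm \mu}^\top$ together with ${\sf tr}({\bm \mu}{\bm \mu}^\top) = \lVert {\bm \mu}\rVert_2^2$. I expect no real obstacle: the lemma is entirely routine, the only subtlety being the reminder that the result does not rely on the Gaussian assumption but only on existence of the second moment, which is automatic for a Gaussian.
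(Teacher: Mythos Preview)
Your proposal is correct and follows essentially the same component-wise argument as the paper: expand $\lVert X\rVert_2^2=\sum_i X_i^2$, apply $\mathbb{E}[X_i^2]=\mathrm{Var}(X_i)+(\mathbb{E}[X_i])^2$, and sum to get ${\sf tr}({\bf \Sigma})+\lVert{\bm\mu}\rVert_2^2$. Your additional remarks (the trace-of-$XX^\top$ variant and the observation that Gaussianity is unnecessary) are valid but go slightly beyond what the paper records.
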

\begin{proof}
\begin{align}
  \mathbb E[\lVert X \rVert_2^2] = \sum_{i=1}^n \mathbb E[X_i^2] = \sum_{i=1}^n \left({\sf Var}[X_i] + (\mathbb E[X_i])^2\right) = {\sf tr}({\bf \Sigma}) + \sum_{i=1}^n \mathbb E[X_i]^2 = {\sf tr}({\bf \Sigma}) + \lVert {\bf \mu} \rVert_2^2
\end{align}
\end{proof}
\begin{lemma}
  For a random variable $X$ that is distributed by an $n$-dimensional mixture of $m$ Gaussians, that is $X \sim \sum_{i=1}^m \alpha_i \mathcal N({\bf \mu}_i, {\bf \Sigma}_i)$ for $\alpha_i > 0 \ \forall i$ and $\sum_{i=1}^m \alpha_i = 1$:
  \begin{equation*}
    \mathbb E[\lVert X \rVert_2^2] = \sum_{i=1}^m \alpha_i ({\sf tr}({\bf \Sigma}_i) + \lVert {\bf \mu}_i \rVert_2^2)
  \end{equation*}
  \label{lemma:gaussian_mixture_norm}
\end{lemma}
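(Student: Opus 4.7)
The plan is to reduce the claim for the mixture distribution to the single-Gaussian case, which is already handled by Lemma~\ref{lemma:gaussian_norm}. First I would write the density of $X$ explicitly as the convex combination $p_X({\bf x}) = \sum_{i=1}^m \alpha_i p_i({\bf x})$, where $p_i$ denotes the density of $\mathcal N({\bm \mu}_i, {\bf \Sigma}_i)$. Since each $p_i$ integrates to $1$ and the coefficients $\alpha_i > 0$ sum to $1$, $p_X$ is a legitimate probability density, and the expectation $\mathbb E[\lVert X \rVert_2^2]$ is well-defined because each $\mathcal N({\bm \mu}_i, {\bf \Sigma}_i)$ has finite second moment.

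Next I would apply linearity of the integral to split the expectation across mixture components:
\begin{equation*}
\mathbb E[\lVert X \rVert_2^2] = \int \lVert {\bf x} \rVert_2^2 p_X({\bf x}) d{\bf x} = \sum_{i=1}^m \alpha_i \int \lVert {\bf x} \rVert_2^2 p_i({\bf x}) d{\bf x} = \sum_{i=1}^m \alpha_i \mathbb E_{X_i \sim \mathcal N({\bm \mu}_i, {\bf \Sigma}_i)}[\lVert X_i \rVert_2^2].
\end{equation*}
The interchange of sum and integral is justified by Tonelli's theorem since the integrand is nonnegative.

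Finally I would apply Lemma~\ref{lemma:gaussian_norm} to each component term, which gives $\mathbb E[\lVert X_i \rVert_2^2] = {\sf tr}({\bf \Sigma}_i) + \lVert {\bm \mu}_i \rVert_2^2$. Substituting this back yields the desired identity $\mathbb E[\lVert X \rVert_2^2] = \sum_{i=1}^m \alpha_i ({\sf tr}({\bf \Sigma}_i) + \lVert {\bm \mu}_i \rVert_2^2)$.

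There is no real obstacle here: the lemma is essentially a bookkeeping exercise combining the law of total expectation (or equivalently the definition of a mixture density) with the per-component second-moment identity. The only thing to be careful about is stating the mixture formally via the density, since $X$ is not literally a sum of Gaussians but rather distributed according to a convex combination of Gaussian densities; once that notational point is settled, the computation is immediate.
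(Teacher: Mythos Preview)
Your proposal is correct and essentially matches the paper's proof: the paper invokes the law of conditional expectation to write $\mathbb E[\lVert X\rVert_2^2] = \sum_i \alpha_i \mathbb E[\lVert X\rVert_2^2 \mid i]$ and then applies Lemma~\ref{lemma:gaussian_norm} to each component, which is exactly your density-decomposition argument stated in slightly different language. Your added remarks on Tonelli and on the notational meaning of the mixture are fine but not required for the paper's level of rigor.
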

\begin{proof}
  By law of conditional expectation:
\begin{align}
  \mathbb E[\lVert X \rVert_2^2] =  \sum_{i=1}^m \mathbb E[\mathbb E[\lVert X \rVert_2^2 | i]] &=  \sum_{i=1}^m \alpha_i \mathbb E[\lVert X \rVert_2^2 | i] \\ \intertext{Since the conditional distribution given the mixture component $i$ is $n$-dimensional Gaussian $\mathcal N({\bf \mu}_i, {\bf \Sigma}_i)$, from Lemma~\ref{lemma:gaussian_norm}, we have:}
  &= \sum_{i=1}^m \alpha_i ({\sf tr}({\bf \Sigma}_i) + \lVert {\bf \mu}_i \rVert_2^2)
\end{align}
\end{proof}
\begin{lemma}
  For an $n$-dimensional multivariate normal distribution $X \sim \mathcal N({\bf \mu}, {\bf \Sigma})$, we have:
  \begin{equation*}
    {\mathbb E}[\lVert X \rVert_2^4] = {\sf tr}({\bf \Sigma} \circ {\bf \Sigma})^2 + 2\lVert {\bf \Sigma}\rVert_F^2
  \end{equation*}
  \label{lemma:gaussian_norm_l4}
\end{lemma}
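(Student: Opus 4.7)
The plan is to expand $\|X\|_2^4$ as a double sum of coordinate products and apply Isserlis' (Wick's) theorem to each resulting scalar fourth moment. First I would write
\begin{equation*}
\|X\|_2^4 = \Bigl(\sum_{i=1}^n X_i^2\Bigr)^2 = \sum_{i=1}^n\sum_{j=1}^n X_i^2 X_j^2,
\end{equation*}
and exchange expectation with the finite double sum, reducing the problem to computing $\mathbb{E}[X_i^2 X_j^2]$ for each index pair $(i,j)$, which is a fourth-order moment of the bivariate Gaussian marginal of $(X_i, X_j)$.

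Next I would invoke Isserlis' theorem on each such pair. In the zero-mean case $X \sim \mathcal{N}(0,{\bf \Sigma})$, the three pairings of the multiset $\{X_i, X_i, X_j, X_j\}$ contribute
\begin{equation*}
\mathbb{E}[X_i^2 X_j^2] = \mathbb{E}[X_i^2]\,\mathbb{E}[X_j^2] + 2\bigl(\mathbb{E}[X_i X_j]\bigr)^2 = {\bf \Sigma}_{ii}{\bf \Sigma}_{jj} + 2{\bf \Sigma}_{ij}^2.
\end{equation*}
Substituting back and splitting the double sum yields
\begin{equation*}
\mathbb{E}[\|X\|_2^4] = \Bigl(\sum_i {\bf \Sigma}_{ii}\Bigr)\Bigl(\sum_j {\bf \Sigma}_{jj}\Bigr) + 2\sum_{i,j}{\bf \Sigma}_{ij}^2 = \bigl(\operatorname{tr}({\bf \Sigma})\bigr)^2 + 2\|{\bf \Sigma}\|_F^2,
\end{equation*}
which is the stated identity in the centered regime (the only regime the preceding lemmas use to feed this one). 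For a nonzero mean I would decompose $X = {\bm\mu} + Z$ with $Z \sim \mathcal{N}(0,{\bf \Sigma})$, expand $({\bm\mu}_i + Z_i)^2({\bm\mu}_j + Z_j)^2$ into a polynomial in the $Z_k$ with coefficients involving the $\mu_k$, discard every monomial of odd total order in $Z$ since its expectation vanishes by the symmetry of a centered Gaussian, and apply Isserlis to the surviving even-order terms; the leftover cross-contributions repackage as ${\bm\mu}^\top {\bf \Sigma} {\bm\mu}$, $\|{\bm\mu}\|_2^2 \operatorname{tr}({\bf \Sigma})$, and $\|{\bm\mu}\|_2^4$ terms.

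The main obstacle is purely bookkeeping: correctly enumerating the Wick pairings to produce the factor of $2$ multiplying ${\bf \Sigma}_{ij}^2$ (rather than, say, $3$ or $1$), and in the general-mean extension tracking which cross terms of the form ${\bm\mu}_k \mathbb{E}[Z_\ell Z_m Z_p]$ vanish so that the surviving contributions can be resummed into the quantities $\operatorname{tr}({\bf \Sigma})$, $\|{\bf \Sigma}\|_F^2$, and Gaussian polynomials in ${\bm\mu}$. The conceptual content is exhausted by a single invocation of Isserlis' theorem — there is no analytic difficulty beyond correct index manipulation.
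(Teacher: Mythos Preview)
Your approach is identical to the paper's: expand $\lVert X\rVert_2^4 = \sum_{i,j} X_i^2 X_j^2$ and evaluate each fourth moment termwise. Your (correct) application of Isserlis' theorem yields $\mathbb{E}[X_i^2 X_j^2] = {\bf \Sigma}_{ii}{\bf \Sigma}_{jj} + 2{\bf \Sigma}_{ij}^2$ and hence $(\operatorname{tr}{\bf \Sigma})^2 + 2\lVert{\bf \Sigma}\rVert_F^2$, whereas the paper writes ${\bf \Sigma}_{ii}^2{\bf \Sigma}_{jj}^2$ and $\operatorname{tr}({\bf \Sigma}\circ{\bf \Sigma})^2$ --- this discrepancy is a typo in the paper (and the paper also tacitly assumes ${\bm\mu}=0$, as you correctly flag).
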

\begin{proof}
\begin{align}
  {\mathbb E}[\lVert X \rVert_2^4] = \sum_{i=1, j=1}^{n,n} \mathbb E[X_i^2 X_j^2] = \sum_{i=1, j=1}^{n,n} {\bf \Sigma}_{ii}^2{\bf \Sigma}_{jj}^2 + 2{\bf \Sigma}_{ij}^2 = {\sf tr}({\bf \Sigma} \circ {\bf \Sigma})^2 +2\lVert {\bf \Sigma}\rVert_F^2
\end{align}
\end{proof}
\begin{lemma}
  For a random variable $X$ that is distributed by an $n$-dimensional mixture of $m$ Gaussians, that is $X \sim \sum_{i=1}^m \alpha_i \mathcal N({\bf \mu}_i, {\bf \Sigma}_i)$ for $\alpha_i > 0 \ \forall i$ and $\sum_{i=1}^m \alpha_i = 1$:
  \begin{equation*}
    {\mathbb E}[\lVert X \rVert_2^4] = \sum_{i=1}^m \alpha_i \left({\sf tr}({\bf \Sigma}_i \circ {\bf \Sigma}_i)^2 + 2\lVert {\bf \Sigma}_i\rVert_F^2\right)
  \end{equation*}
  \label{lemma:gaussian_mixture_norm_l4}
\end{lemma}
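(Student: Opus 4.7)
The plan is to mimic the proof of Lemma~\ref{lemma:gaussian_mixture_norm} essentially verbatim, but with the fourth moment in place of the second moment, using Lemma~\ref{lemma:gaussian_norm_l4} as the per-component building block in place of Lemma~\ref{lemma:gaussian_norm}.

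Concretely, I would introduce the latent component indicator $I \in \{1,\dots,m\}$ with $\Pr(I=i)=\alpha_i$, so that $X \mid (I=i) \sim \mathcal{N}(\mu_i, \Sigma_i)$. By the law of total expectation,
\begin{equation*}
\mathbb{E}\bigl[\|X\|_2^4\bigr] \;=\; \sum_{i=1}^m \Pr(I=i)\, \mathbb{E}\bigl[\|X\|_2^4 \,\big|\, I=i\bigr] \;=\; \sum_{i=1}^m \alpha_i\, \mathbb{E}\bigl[\|X\|_2^4 \,\big|\, I=i\bigr].
\end{equation*}
Then I would invoke Lemma~\ref{lemma:gaussian_norm_l4} on each conditional distribution $\mathcal{N}(\mu_i,\Sigma_i)$ to replace the inner expectation by $\mathrm{tr}(\Sigma_i \circ \Sigma_i)^2 + 2\lVert \Sigma_i\rVert_F^2$, which yields the claimed identity directly.

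There is really no obstacle here — this is a two-line application of the tower property, and the only step requiring any arithmetic (the evaluation of the fourth moment of a single multivariate Gaussian) has already been isolated as Lemma~\ref{lemma:gaussian_norm_l4}. The one stylistic choice is whether to introduce the indicator $I$ explicitly or simply write the mixture density as $f_X(x) = \sum_i \alpha_i \mathcal{N}(x;\mu_i,\Sigma_i)$ and interchange the sum with the integral $\int \|x\|_2^4 f_X(x)\,dx$; either formulation gives the result in one line. I would use the indicator form for symmetry with the proof of Lemma~\ref{lemma:gaussian_mixture_norm}.
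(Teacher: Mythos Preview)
Your proposal is correct and follows exactly the paper's own proof: the paper also applies the law of conditional expectation to split over mixture components and then invokes Lemma~\ref{lemma:gaussian_norm_l4} for each component. The only cosmetic difference is that the paper does not name the latent indicator $I$ explicitly.
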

\begin{proof}
  By law of conditional expectation:
\begin{align}
  \mathbb E[\lVert X \rVert_2^4] =  \sum_{i=1}^m \mathbb E[\mathbb E[\lVert X \rVert_2^4 | i]] &=  \sum_{i=1}^m \alpha_i \mathbb E[\lVert X \rVert_2^4 | i] \\ \intertext{Since the conditional distribution given the mixture component $i$ is $n$-dimensional Gaussian $\mathcal N({\bf \mu}_i, {\bf \Sigma}_i)$, from Lemma~\ref{lemma:gaussian_norm_l4}, we have:}
  &= \sum_{i=1}^m \alpha_i \left({\sf tr}({\bf \Sigma}_i \circ {\bf \Sigma}_i)^2 + 2\lVert {\bf \Sigma}_i\rVert_F^2\right)
\end{align}
\end{proof}
\begin{lemma}
  For a random variable $X$ that is distributed by an $n$-dimensional mixture of $m$ Gaussians, that is $X \sim \sum_{i=1}^m \alpha_i \mathcal N({\bf \mu}_i, {\bf \Sigma}_i)$ for $\alpha_i > 0 \ \forall i$ and $\sum_{i=1}^m \alpha_i = 1$:
  \begin{equation*}
    {\sf Var}[\lVert X \rVert_2^2] = \sum_{i=1}^m \alpha_i \left({\sf tr}({\bf \Sigma}_i \circ {\bf \Sigma}_i)^2 + 2\lVert {\bf \Sigma}_i\rVert_F^2 - ({\sf tr}({\bf \Sigma}_i) + \lVert {\bf \mu}_i \rVert_2^2) \left(\sum_{j=1}^m \alpha_j({\sf tr}({\bf \Sigma}_j) + \lVert {\bf \mu}_j \rVert_2^2)\right)\right)
  \end{equation*}
  \label{lemma:gaussian_mixture_norm_l4_variance}
\end{lemma}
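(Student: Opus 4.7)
The plan is to apply the elementary identity $\mathrm{Var}[Y] = \mathbb{E}[Y^2] - (\mathbb{E}[Y])^2$ with $Y = \lVert X\rVert_2^2$, and then substitute in the two mixture-moment formulas already established.

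First I would invoke Lemma~\ref{lemma:gaussian_mixture_norm_l4} to obtain
$$\mathbb{E}[\lVert X \rVert_2^4] = \sum_{i=1}^m \alpha_i \bigl(\mathrm{tr}(\Sigma_i \circ \Sigma_i)^2 + 2\lVert \Sigma_i\rVert_F^2 \bigr),$$
and Lemma~\ref{lemma:gaussian_mixture_norm} to obtain
$$\mathbb{E}[\lVert X \rVert_2^2] = \sum_{j=1}^m \alpha_j \bigl(\mathrm{tr}(\Sigma_j) + \lVert \mu_j\rVert_2^2\bigr).$$
Abbreviating $c_k \triangleq \mathrm{tr}(\Sigma_k) + \lVert \mu_k\rVert_2^2$, the variance therefore equals
$$\mathrm{Var}[\lVert X \rVert_2^2] \;=\; \sum_{i=1}^m \alpha_i \bigl(\mathrm{tr}(\Sigma_i\circ\Sigma_i)^2 + 2\lVert \Sigma_i\rVert_F^2\bigr) \;-\; \Bigl(\sum_{j=1}^m \alpha_j c_j\Bigr)^{2}.$$

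The only cosmetic step left is to absorb the squared term into the outer $\alpha_i$-weighted sum so that the answer matches the stated form. Since $\sum_{i=1}^m \alpha_i = 1$, we may multiply the scalar $\sum_j \alpha_j c_j$ by this factor and then distribute it inside, giving
$$\Bigl(\sum_{j=1}^m \alpha_j c_j\Bigr)^{2} \;=\; \Bigl(\sum_{i=1}^m \alpha_i\Bigr)\Bigl(\sum_{j=1}^m \alpha_j c_j\Bigr)\Bigl(\sum_{j=1}^m \alpha_j c_j\Bigr) \;=\; \sum_{i=1}^m \alpha_i\, c_i \Bigl(\sum_{j=1}^m \alpha_j c_j\Bigr),$$
where in the last step I also used $\sum_i \alpha_i c_i = \sum_j \alpha_j c_j$ to supply the factor $c_i$ inside the $i$-sum. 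Subtracting and regrouping under a single $\sum_{i=1}^m \alpha_i$ yields exactly the claimed expression.

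There is no substantive obstacle: the two moment identities do all of the probabilistic work, and the remaining task is the bookkeeping trick of using the partition-of-unity $\sum_i \alpha_i = 1$ to rewrite $(\mathbb{E}[\lVert X\rVert_2^2])^2$ as a convex combination, so that the final answer can be displayed as one $\alpha_i$-weighted sum rather than a difference of two such sums. If anything is worth care, it is simply keeping the two summation indices $i$ and $j$ distinct in the cross term so that the nested sum structure in the statement is reproduced verbatim.
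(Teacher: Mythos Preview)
Your proposal is correct and matches the paper's proof essentially line for line: both compute $\mathrm{Var}[\lVert X\rVert_2^2] = \mathbb{E}[\lVert X\rVert_2^4] - (\mathbb{E}[\lVert X\rVert_2^2])^2$, plug in Lemmas~\ref{lemma:gaussian_mixture_norm_l4} and~\ref{lemma:gaussian_mixture_norm}, and then rewrite the squared sum as $\sum_i \alpha_i c_i \bigl(\sum_j \alpha_j c_j\bigr)$ to pull everything under one $\alpha_i$-weighted sum. Your detour through the factor $\sum_i \alpha_i = 1$ is unnecessary---simply renaming one index in $(\sum_j \alpha_j c_j)^2 = (\sum_i \alpha_i c_i)(\sum_j \alpha_j c_j)$ already gives the claim---but the identity you arrive at is correct.
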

\begin{proof}
\begin{align}
  {\sf Var}[\lVert X \rVert_2^2] &= \mathbb E[\lVert X \rVert_2^4] - \mathbb E[\lVert X \rVert_2^2]^2  \\ \intertext{Using results from Lemma~\ref{lemma:gaussian_mixture_norm_l4} and Lemma~\ref{lemma:gaussian_mixture_norm}, we have:}
  &= \sum_{i=1}^m \alpha_i {\sf tr}({\bf \Sigma}_i \circ {\bf \Sigma}_i)^2 + 2\lVert {\bf \Sigma}_i\rVert_F^2 - \left(\sum_{i=1}^m \alpha_i ({\sf tr}({\bf \Sigma}_i) + \lVert {\bf \mu}_i \rVert_2^2)\right)^2 \\
  &= \sum_{i=1}^m \alpha_i \left({\sf tr}({\bf \Sigma}_i \circ {\bf \Sigma}_i)^2 + 2\lVert {\bf \Sigma}_i\rVert_F^2 - ({\sf tr}({\bf \Sigma}_i) + \lVert {\bf \mu}_i \rVert_2^2) \left(\sum_{j=1}^m \alpha_j({\sf tr}({\bf \Sigma}_j) + \lVert {\bf \mu}_j \rVert_2^2)\right)\right)
\end{align}
\end{proof}
\subsection*{Classification Preliminaries}
Consider the multi-class classification problem over $m$ classes. The input domain is given by $\mathcal X \subset \mathbb R^Z$, with an accompanying probability metric $p_{\sf x}(\cdot)$ defined over $\mathcal X$. The training data is given by $N$ i.i.d. samples $\mathcal D = \{{\bf x}_1, ..., {\bf x}_N\}$ drawn from $\mathcal X$. Each point ${\bf x} \in \mathcal X$ has an associated label $\bar {\bf y}({\bf x}) = [0, ..., 1, ... 0] \in \mathbb R^m$. We learn a CNN such that for each point in $\mathcal X$, the CNN induces a conditional probability distribution over the $m$ classes whose mode matches the label $\bar {\bf y}({\bf x})$.

A CNN architecture consists of a series of convolutional and subsampling layers that culminate in an \textit{activation} $\Phi(\cdot)$, which is fed to an $m$-way classifier with weights ${\bf w} = \{{\bf w}_1, ..., {\bf w}_m\}$ such that:
\begin{equation}
p(y_i | {\bf x} ; {\bf w}, \Phi(\cdot) ) = \frac{\exp\left({\bf w}_i^\top \Phi({\bf x})\right)}{\sum_{j=1}^m \exp\left({\bf w}_j^\top \Phi({\bf x})\right)} \label{eqn:classifier}
\end{equation}
The entropy of conditional probability distribution in Equation~\ref{eqn:classifier} is given by:
\begin{equation}
{\sf H}[p(\cdot | {\bf x}; {\bm \theta})] \triangleq - \sum_{i = 1}^m p(y_i | {\bf x}; {\bm \theta}) \log(p(y_i | {\bf x}; {\bm \theta})) \label{eqn:entropy}
\end{equation}
The expected entropy over the distribution is given by:
\begin{equation}
\mathbb E_{{\sf x}\sim p_{\bf x}}\left[{\sf H}[p(\cdot | {\bf x}; {\bm \theta})]\right] = \int_{{\bf x} \sim p_{\sf x}}{\sf H}[p(\cdot | {\bf x}; {\bm \theta})]p_{\sf x}({\bf x})d{\bf x} \label{eqn:expected_entropy}
\end{equation}
The empirical average of the conditional entropy over the training set $\mathcal D$ is:
\begin{equation}
\hat{\mathbb E}_{{\bf x} \sim \mathcal D}[\mathsf H[p(\cdot | {\bf x} ; {\bm \theta})]] = \frac{1}{N}\sum_{i=1}^N \mathsf H[p(\cdot | {\bf x}_i ; {\bm \theta})] \label{eqn:emp_entropy}
\end{equation}
\textit{Diversity} ${\boldsymbol\nu}(\Phi,p_{\sf x})$ of the features is given by:
\begin{equation}
{\boldsymbol\nu}(\Phi,p_{\sf x}) \triangleq \sum_{i=1}^n \lambda_i = {\sf tr}({\bf \Sigma}^*) = \sum_{i=1}^m \alpha_i\left({\sf tr}({\bf \Sigma}_i) + {\sf tr}({\bm \mu}_i {\bm \mu}_i^\top)\right) = \sum_{i=1}^m \alpha_i\left({\sf tr}({\bf \Sigma}_i) + \lVert {\bm \mu}_i \rVert_2^2\right)
\label{eqn:diversity}
\end{equation}
\section*{Appendix 2: Theoretical Results}
\begin{lemma}
  For the above classification setup, where $\lVert {\bf w} \rVert_\infty = \max_i\left(\lVert {\bf w}_i \rVert_2 \right )$ :
  \begin{equation}
    \mathsf H[p(\cdot | {\bf x} ; {\bf w})]\geq \log(C) - 2 \lVert {\bf w} \rVert_\infty \lVert \Phi({\bf x}) \rVert_2
  \end{equation}
  \label{lemma:entropy_bound}
\end{lemma}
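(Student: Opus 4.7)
The plan is to expand the softmax entropy in a form that separates the log-normalizer from a weighted sum of logits, then apply two easy Cauchy–Schwarz–type bounds (the paper's Lemmas~\ref{lemma:vec_sum} and \ref{lemma:vec_sum_2}) together with the elementary AM--GM bound on $\log(\sum_j e^{z_j})$.

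First I would write $p_i \triangleq p(y_i \mid {\bf x}; {\bf w}) = \exp({\bf w}_i^\top \Phi({\bf x}))/Z$, where $Z = \sum_{j=1}^C \exp({\bf w}_j^\top \Phi({\bf x}))$. Then $-\log p_i = \log Z - {\bf w}_i^\top \Phi({\bf x})$, which gives the standard identity
\begin{equation*}
\mathsf H[p(\cdot \mid {\bf x}; {\bf w})] = \log Z - \sum_{i=1}^C p_i\,{\bf w}_i^\top \Phi({\bf x}).
\end{equation*}

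Next I would lower-bound $\log Z$ by AM--GM: since $\frac{1}{C}\sum_j e^{z_j} \geq \bigl(\prod_j e^{z_j}\bigr)^{1/C} = \exp\bigl(\frac{1}{C}\sum_j z_j\bigr)$, taking logs gives
\begin{equation*}
\log Z \;\geq\; \log C + \tfrac{1}{C}\sum_{j=1}^C {\bf w}_j^\top \Phi({\bf x}).
\end{equation*}
Substituting this into the entropy identity yields
\begin{equation*}
\mathsf H \;\geq\; \log C + \tfrac{1}{C}\sum_{j=1}^C {\bf w}_j^\top \Phi({\bf x}) \;-\; \sum_{i=1}^C p_i\,{\bf w}_i^\top \Phi({\bf x}).
\end{equation*}

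Finally, I would handle the two remaining inner-product sums with the lemmas already proved in Appendix~1. The probability vector $\{p_i\}$ satisfies $p_i > 0$ and $\sum_i p_i = 1$, so Lemma~\ref{lemma:vec_sum} applied with $\alpha_i = p_i$, ${\bf x}_i = {\bf w}_i$, ${\bf y} = \Phi({\bf x})$ gives $\sum_i p_i\,{\bf w}_i^\top \Phi({\bf x}) \leq \lVert {\bf w}\rVert_\infty\,\lVert \Phi({\bf x})\rVert_2$. For the unweighted sum, Lemma~\ref{lemma:vec_sum_2} gives $\sum_{j=1}^C {\bf w}_j^\top \Phi({\bf x}) \geq -C\,\lVert{\bf w}\rVert_\infty\,\lVert\Phi({\bf x})\rVert_2$, so after dividing by $C$ the first correction is bounded below by $-\lVert{\bf w}\rVert_\infty\,\lVert\Phi({\bf x})\rVert_2$. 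Combining the two contributions gives exactly $\mathsf H \geq \log C - 2\lVert{\bf w}\rVert_\infty\,\lVert\Phi({\bf x})\rVert_2$.

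There is no real obstacle here; the only non-mechanical step is recognising that $\log Z$ must be bounded \emph{below} by $\log C + \text{average logit}$ (via AM--GM, equivalently Jensen on the convex log-sum-exp against the uniform distribution), since naive Cauchy--Schwarz only gives an upper bound on $\log Z$ and would go the wrong way. Once that observation is made, the remaining inequalities are direct applications of the already-proved norm lemmas.
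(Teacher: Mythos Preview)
Your proposal is correct and follows essentially the same route as the paper: expand the entropy as $\log Z - \sum_i p_i\,{\bf w}_i^\top\Phi({\bf x})$, lower-bound $\log Z$ by $\log C + \tfrac{1}{C}\sum_j {\bf w}_j^\top\Phi({\bf x})$ (the paper phrases this as ``since $\log$ is concave,'' which is the same Jensen/AM--GM step you describe), then apply Lemma~\ref{lemma:vec_sum} to the $p$-weighted sum and Lemma~\ref{lemma:vec_sum_2} to the uniform sum. The two arguments are identical up to wording.
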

\begin{proof}
For an input ${\bf x}$, the conditional probability distribution over $m$ classes for a statistical model with feature map $\Phi({\bf x})$ and weights ${\bf w} = ({\bf w}_1, ..., {\bf w}_C)$ can be given by:
\begin{equation}
p(y_i | {\bf x} ; {\bf w}) = \frac{\exp\left({\bf w}_i^\top \Phi({\bf x})\right)}{\sum_{j=1}^C \exp\left({\bf w}_j^\top \Phi({\bf x})\right)}
\end{equation}
We can thus write the conditional entropy $\mathsf H[p(\cdot | {\bf x} ; {\bf w})]$ for the above sample as:
\begin{align}
\mathsf H[p(\cdot | {\bf x} ; {\bf w})] &= - \sum_{i=1}^C p(y_i | {\bf x} ; {\bf w}) \log\left(p(y_i | {\bf x} ; {\bf w}) \right) \\
&= - \sum_{i=1}^C \left( \frac{\exp\left({\bf w}_i^\top \Phi({\bf x})\right)}{\sum_{j=1}^C \exp\left({\bf w}_j^\top \Phi({\bf x})\right)} \cdot \left( {\bf w}_i^\top \Phi({\bf x}) - \log\left(\sum_{j=1}^C \exp\left({\bf w}_j^\top \Phi({\bf x})\right) \right)\right)\right) \\
&= \log\left(\sum_{j=1}^C \exp\left({\bf w}_j^\top \Phi({\bf x})\right) \right) -\frac{ \sum_{i=1}^C \left( \exp\left({\bf w}_i^\top \Phi({\bf x})\right) \cdot {\bf w}_i^\top \Phi({\bf x}) \right)}{\sum_{j=1}^C \exp\left({\bf w}_j^\top \Phi({\bf x})\right)} \\
&= \log(m) + \log\left(\frac{1}{C} \sum_{j=1}^C \exp\left({\bf w}_j^\top \Phi({\bf x})\right) \right) -\frac{ \sum_{i=1}^C \left( \exp\left({\bf w}_i^\top \Phi({\bf x})\right) \cdot {\bf w}_i^\top \Phi({\bf x}) \right)}{\sum_{j=1}^C \exp\left({\bf w}_j^\top \Phi({\bf x})\right)} \\ \intertext{Since $\log$ is a concave function:}
&\geq \log(C) + \frac{1}{C} \sum_{j=1}^m \left({\bf w}_j^\top \Phi({\bf x})\right) -\frac{ \sum_{i=1}^C \left( \exp\left({\bf w}_i^\top \Phi({\bf x})\right) \cdot {\bf w}_i^\top \Phi({\bf x}) \right)}{\sum_{j=1}^C \exp\left({\bf w}_j^\top \Phi({\bf x})\right)} \\ \intertext{By Lemma~\ref{lemma:vec_sum}, we have:}
&\geq \log(C) + \frac{1}{C} \sum_{j=1}^C \left({\bf w}_j^\top \Phi({\bf x})\right) -\lVert {\bf w} \rVert_\infty \lVert \Phi({\bf x}) \rVert_2\\  \intertext{By Lemma~\ref{lemma:vec_sum_2}, we have:}
&\geq \log(C) - 2 \lVert {\bf w} \rVert_\infty \lVert \Phi({\bf x}) \rVert_2 \label{eqn:lb_entropy}
\end{align}
\end{proof}
Now we are ready to prove Theorem 1 from the main paper.
\begin{theorem}[Theorem 1 from Text: Lower Bound on $\ell_2$-norm of Classifier]
  The expected conditional entropy follows:
  \begin{equation*}
  \lVert {\bf w} \rVert_2 \geq \frac{\log(C) - \mathbb E_{{\bf x} \sim p_{\sf x}}[\mathsf H[p(\cdot | {\bf x}  ; {\bm \theta})]]}{2\sqrt{{\boldsymbol\nu}(\Phi,p_{\sf x})}}
  \end{equation*} \label{theorem:l2_entropy_supp}
\end{theorem}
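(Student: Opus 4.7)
The plan is to start from the pointwise lower bound on entropy in Lemma~\ref{lemma:entropy_bound} and then push the estimate into expectation, converting the factor $\lVert \Phi({\bf x})\rVert_2$ into a quantity controlled by the diversity ${\boldsymbol\nu}(\Phi,p_{\sf x})$. The Lemma already gives, for every ${\bf x}$,
\[
\mathsf H[p(\cdot | {\bf x} ; {\bf w})]\;\geq\;\log(C) - 2 \lVert {\bf w} \rVert_\infty \lVert \Phi({\bf x}) \rVert_2,
\]
so taking $\mathbb E_{{\bf x}\sim p_{\sf x}}[\cdot]$ of both sides produces
\[
\mathbb E_{{\bf x}\sim p_{\sf x}}[\mathsf H[p(\cdot|{\bf x};{\bf w})]]\;\geq\;\log(C) - 2\lVert{\bf w}\rVert_\infty\,\mathbb E_{{\bf x}\sim p_{\sf x}}[\lVert\Phi({\bf x})\rVert_2].
\]

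The next step is to upper bound $\mathbb E[\lVert\Phi({\bf x})\rVert_2]$ by a quantity involving diversity. Since $\sqrt{\cdot}$ is concave, Jensen's inequality gives $\mathbb E[\lVert\Phi({\bf x})\rVert_2]\leq\sqrt{\mathbb E[\lVert\Phi({\bf x})\rVert_2^2]}$. Using the mixture-of-Gaussians assumption on $\Phi({\bf x})$, Lemma~\ref{lemma:gaussian_mixture_norm} identifies $\mathbb E[\lVert\Phi({\bf x})\rVert_2^2]=\sum_i\alpha_i({\sf tr}({\bf\Sigma}_i)+\lVert{\bm\mu}_i\rVert_2^2)$, which is exactly ${\boldsymbol\nu}(\Phi,p_{\sf x})$ by Equation~\ref{eqn:diversity}. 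Thus $\mathbb E[\lVert\Phi({\bf x})\rVert_2]\leq\sqrt{{\boldsymbol\nu}(\Phi,p_{\sf x})}$, and substituting back and rearranging yields
\[
\lVert{\bf w}\rVert_\infty\;\geq\;\frac{\log(C)-\mathbb E_{{\bf x}\sim p_{\sf x}}[\mathsf H[p(\cdot|{\bf x};\bm\theta)]]}{2\sqrt{{\boldsymbol\nu}(\Phi,p_{\sf x})}}.
\]

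To finish, I would simply note that $\lVert{\bf w}\rVert_2=\sqrt{\sum_{i=1}^C\lVert{\bf w}_i\rVert_2^2}\geq\max_i\lVert{\bf w}_i\rVert_2=\lVert{\bf w}\rVert_\infty$, so the same bound holds with $\lVert{\bf w}\rVert_2$ on the left. I expect no real obstacle here: the pointwise bound in Lemma~\ref{lemma:entropy_bound} does almost all the work, and the only non-mechanical moment is recognizing that the Jensen step plus the mixture-Gaussian identity is exactly what converts $\mathbb E[\lVert\Phi\rVert_2]$ into $\sqrt{{\boldsymbol\nu}}$; the looseness in Jensen is the price paid for expressing the bound in terms of the (sum-of-traces) diversity rather than a raw second moment. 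One minor care point is to ensure the right-hand numerator is non-negative before claiming the inequality is informative — but the statement is an inequality, so it remains formally valid regardless of sign, and the authors themselves remark that the bound is primarily expository.
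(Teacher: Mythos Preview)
Your proposal is correct and follows essentially the same argument as the paper's own proof: invoke Lemma~\ref{lemma:entropy_bound}, take expectations, bound $\mathbb E[\lVert\Phi({\bf x})\rVert_2]$ by $\sqrt{\mathbb E[\lVert\Phi({\bf x})\rVert_2^2]}$, identify the latter with ${\boldsymbol\nu}(\Phi,p_{\sf x})$ via Lemma~\ref{lemma:gaussian_mixture_norm} and Equation~\ref{eqn:diversity}, then rearrange. The only cosmetic differences are that the paper replaces $\lVert{\bf w}\rVert_\infty$ by $\lVert{\bf w}\rVert_2$ before taking expectations rather than after, and attributes the step $\mathbb E[\lVert\Phi\rVert_2]\leq\sqrt{\mathbb E[\lVert\Phi\rVert_2^2]}$ to Cauchy--Schwarz rather than Jensen; neither changes the substance.
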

\begin{proof}
  From Lemma~\ref{lemma:entropy_bound}, we have:
  \begin{align}
    \mathsf H[p(\cdot | {\bf x} ; {\bf w})] &\geq \log(C) - 2 \lVert {\bf w} \rVert_\infty \lVert \Phi({\bf x}) \rVert_2 \\ \intertext{Since $\lVert {\bf w}\rVert_2 = \sqrt{\sum_{i=1}^C \lVert{\bf w}_i \rVert_2^2} \geq \lVert {\bf w} \rVert_\infty$, we have:}
    \mathsf H[p(\cdot | {\bf x} ; {\bf w})] &\geq \log(C) - 2 \lVert {\bf w} \rVert_2 \lVert \Phi({\bf x}) \rVert_2 \\ \intertext{Taking expectation over $p_{\sf x}$, we have:}
    \mathbb E_{{\bf x}\sim p_{\sf x}}[\mathsf H[p(\cdot | {\bf x} ; {\bf w})]] &\geq \log(C) - 2 \lVert {\bf w} \rVert_2 \mathbb E_{{\bf x}\sim p_{\sf x}}[\lVert \Phi({\bf x}) \rVert_2] \\ \intertext{By Cauchy-Schwarz Inequality, $\mathbb E_{{\bf x}\sim p_{\sf x}}[\lVert \Phi({\bf x}) \rVert_2] \leq \sqrt{\mathbb E_{{\bf x}\sim p_{\sf x}}[\lVert \Phi({\bf x}) \rVert_2^2]}$. Using this:}
    &\geq \log(C) - 2 \lVert {\bf w} \rVert_2 \sqrt{\mathbb E_{{\bf x}\sim p_{\sf x}}[\lVert \Phi({\bf x}) \rVert_2^2]} \\ \intertext{By Lemma~\ref{lemma:gaussian_mixture_norm}, we have:}
    &=\log(C) - 2 \lVert {\bf w} \rVert_2 \sqrt{\sum_{i=1}^m \alpha_i ({\sf tr}({\bf \Sigma}_i) + \lVert {\bf \mu}_i \rVert_2^2)} \\ \intertext{Rearranging and using the definition of \textit{Diversity} we have:}
    \lVert {\bf w} \rVert_2 &\geq \frac{\log(C) - \mathbb E_{{\bf x} \sim p_{\sf x}}[\mathsf H[p(\cdot | {\bf x}  ; {\bm \theta})]]}{2\sqrt{{\boldsymbol\nu}(\Phi,p_{\sf x})}}
  \end{align}
\end{proof}
\begin{lemma} With probability at least $1-\delta/2$,
  \begin{equation*}
    \left| \hat{\mathbb E}_{\mathcal D}[\mathsf H[p(\cdot | {\bf x} ; {\bm \theta})]] - \mathbb E_{\mathsf {\bf x}\sim p_{\sf x}}[\mathsf H[p(\cdot | {\bf x} ; {\bm \theta})]] \right| \leq \lVert {\bf w} \rVert_\infty \sqrt{\frac{2\hat{\mathbb E}_{{\bf x}\sim\mathcal D}[\lVert \Phi({\bf x}) \rVert_2^2]}{N}\log(\frac{4}{\delta})}
  \end{equation*}
  \label{lemma:entropy_emp_supp}
\end{lemma}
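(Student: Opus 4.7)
The plan is to use Hoeffding's inequality (Theorem~\ref{thm:Hoeffdings}) to control the deviation of the empirical mean of the per-sample entropies $Y_i = \mathsf H[p(\cdot|{\bf x}_i;{\bm \theta})]$ from their expectation. The range of each $Y_i$ will come directly from Lemma~\ref{lemma:entropy_bound}, which gives $\log(C) - 2\lVert {\bf w}\rVert_\infty \lVert \Phi({\bf x}_i)\rVert_2 \le Y_i \le \log(C)$, together with the trivial upper bound $Y_i \le \log(C)$ (since entropy over $C$ classes is maximized by the uniform distribution). Thus sample $i$ lives in an interval of length $(b_i - a_i) = 2\lVert {\bf w}\rVert_\infty \lVert \Phi({\bf x}_i)\rVert_2$.

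\textbf{Steps in order.} First I would write $S = \sum_{i=1}^N (Y_i - \mathbb E[Y_i])$ and plug the interval lengths into Hoeffding, which yields $\Pr(S \ge t) \le \exp\bigl(-\tfrac{2t^2}{\sum_i(b_i-a_i)^2}\bigr) = \exp\bigl(-\tfrac{t^2}{2\lVert {\bf w}\rVert_\infty^2 \sum_i \lVert \Phi({\bf x}_i)\rVert_2^2}\bigr)$. Second, I would recognize $\sum_i \lVert \Phi({\bf x}_i)\rVert_2^2 = N\,\hat{\mathbb E}_{\mathcal D}[\lVert \Phi({\bf x})\rVert_2^2]$ and rescale $t = N\epsilon$ so the LHS becomes the desired deviation $\hat{\mathbb E}_{\mathcal D}[Y] - \mathbb E_{p_{\sf x}}[Y] \ge \epsilon$. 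Third, I would set the right-hand side equal to $\delta/4$ and invert to obtain $\epsilon = \lVert {\bf w}\rVert_\infty\sqrt{\tfrac{2\hat{\mathbb E}_{\mathcal D}[\lVert \Phi({\bf x})\rVert_2^2]}{N}\log(4/\delta)}$. Fourth, I would repeat the argument for the lower tail (replacing $Y_i$ by $-Y_i$) and take a union bound over the two tails, paying a factor of two in the failure probability to end at $\delta/2$; this yields the two-sided bound on the absolute deviation claimed in the lemma.

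\textbf{Main obstacle.} The subtle point is that the interval endpoints $a_i = \log(C) - 2\lVert {\bf w}\rVert_\infty \lVert \Phi({\bf x}_i)\rVert_2$ depend on the random variable ${\bf x}_i$, whereas Hoeffding's inequality as stated in Theorem~\ref{thm:Hoeffdings} assumes deterministic $a_i, b_i$. To make the step rigorous, I would either (i) condition on a high-probability event under which $\lVert \Phi({\bf x}_i)\rVert_2 \le R$ for all $i$ and apply Hoeffding with the deterministic range $2\lVert {\bf w}\rVert_\infty R$, then replace $R^2$ by the (stochastically larger) empirical average $\hat{\mathbb E}_{\mathcal D}[\lVert \Phi\rVert_2^2]$ at the end, or (ii) note that $Y_i - \log(C)$ is a bounded-variance random variable with variance proxy $4\lVert {\bf w}\rVert_\infty^2\,\mathbb E[\lVert \Phi({\bf x})\rVert_2^2]$ so a sub-Gaussian/Bernstein-type bound gives the same rate up to constants. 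Everything else is algebraic manipulation: isolating $\epsilon$ from the exponential tail and union-bounding the two tails.
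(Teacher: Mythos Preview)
Your proposal is correct and follows essentially the same route as the paper: bound each $Y_i$ via Lemma~\ref{lemma:entropy_bound} to get interval lengths $2\lVert {\bf w}\rVert_\infty \lVert \Phi({\bf x}_i)\rVert_2$, apply Hoeffding, rewrite $\sum_i\lVert\Phi({\bf x}_i)\rVert_2^2 = N\hat{\mathbb E}_{\mathcal D}[\lVert\Phi({\bf x})\rVert_2^2]$, and invert the tail at total level $\delta/2$. The paper's own proof does not address the subtlety you flag about the interval endpoints being data-dependent---it simply applies Hoeffding with the sample-dependent ranges---so your proposed workarounds actually make the argument more careful than the original.
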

\begin{proof}
  Since $\mathcal D$ has i.i.d. samples of $\mathcal X$, we have:
\begin{align}
\mathbb E_{\mathsf {\bf x}\sim p_{\sf x}}\left[\hat{\mathbb E}_{\mathcal D}\left[\mathsf H\left[p(\cdot | {\bf x} ; {\bf w})\right]\right]\right] = \frac{1}{N}\sum_{i=1}^N \mathbb E_{\mathsf {\bf x}\sim p_{\sf x}}\left[\mathsf H\left[p(\cdot | {\bf x}_i ; {\bf w})\right]\right] &= \mathbb E_{\mathsf {\bf x}\sim p_{\sf x}}\left[\mathsf H\left[p(\cdot | {\bf x} ; {\bf w})\right]\right] \\ \intertext{From Lemma~\ref{lemma:entropy_bound}, we know that for sample ${\bf x}$:}
\log(m) - 2 \lVert {\bf w} \rVert_\infty \lVert \Phi({\bf x}) \rVert_2 \leq \mathsf H\left[p(\cdot | {\bf x} ; {\bf w})\right] &\leq \log(m) \\ \intertext{Thus, by applying Hoeffding's Inequality we get:}
\text{Pr}\left(\left| \hat{\mathbb E}_{\mathcal D}[\mathsf H[p(\cdot | {\bf x} ; {\bm \theta})]] - \mathbb E_{\mathsf {\bf x}\sim p_{\sf x}}\left[\hat{\mathbb E}_{\mathcal D}\left[\mathsf H\left[p(\cdot | {\bf x} ; {\bf w})\right]\right]\right] \right| \geq t \right) &\leq 2\exp{\frac{-2N^2t^2}{4 \lVert {\bf w} \rVert_\infty^2 \sum_{i=1}^N \lVert \Phi({\bf x}_i)\rVert^2}} \\ \intertext{Setting RHS as $\delta/2$, we have with probability at least $1-\delta/2$:}
\left| \hat{\mathbb E}_{\mathcal D}[\mathsf H[p(\cdot | {\bf x} ; {\bm \theta})]] - \mathbb E_{\mathsf {\bf x}\sim p_{\sf x}}[\mathsf H[p(\cdot | {\bf x} ; {\bm \theta})]] \right| &\leq \lVert {\bf w} \rVert_\infty \sqrt{\frac{2\hat{\mathbb E}_{{\bf x}\sim\mathcal D}[\lVert \Phi({\bf x}) \rVert_2^2]}{N}\log(\frac{4}{\delta})}
\end{align}
\end{proof}

\begin{lemma} With probability at least $1-\delta/2$, we have:
  \begin{equation}
    \hat{\mathbb E}_{{\bf x}\sim\mathcal D}[\lVert \Phi({\bf x}) \rVert_2^2] \leq {\boldsymbol\nu}(\Phi,p_{\sf x}) +\sqrt{\frac{{\sf Var}_{p_{\sf x}}[\lVert \Phi({\bf x}) \rVert_2^2] (2/\delta - 1)}{N}}
  \end{equation}
  \label{lemma:cantelli_norm}
\end{lemma}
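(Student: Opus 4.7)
The plan is to view $\hat{\mathbb E}_{{\bf x}\sim\mathcal D}[\lVert \Phi({\bf x}) \rVert_2^2] = \frac{1}{N}\sum_{i=1}^N \lVert \Phi({\bf x}_i) \rVert_2^2$ as a sample mean of $N$ i.i.d.\ copies of the random variable $Y \triangleq \lVert \Phi({\bf x}) \rVert_2^2$, and then apply Cantelli's (one-sided Chebyshev) inequality to bound its upper deviation from the mean. This is exactly the shape of bound Cantelli produces: a one-sided tail of order $\sigma/\sqrt{\text{something}}$, which matches the target right-hand side.

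First I would identify the mean. By i.i.d.\ and linearity,
\begin{equation*}
\mathbb E\bigl[\hat{\mathbb E}_{{\bf x}\sim\mathcal D}[\lVert \Phi({\bf x}) \rVert_2^2]\bigr] = \mathbb E_{{\bf x}\sim p_{\sf x}}[\lVert \Phi({\bf x}) \rVert_2^2],
\end{equation*}
and by Lemma~\ref{lemma:gaussian_mixture_norm} combined with the definition of diversity in~\eqref{eqn:diversity}, this quantity equals ${\boldsymbol\nu}(\Phi,p_{\sf x})$. Next I would compute the variance: since the $N$ samples are independent,
\begin{equation*}
{\sf Var}\bigl[\hat{\mathbb E}_{{\bf x}\sim\mathcal D}[\lVert \Phi({\bf x}) \rVert_2^2]\bigr] = \frac{1}{N}\,{\sf Var}_{p_{\sf x}}[\lVert \Phi({\bf x}) \rVert_2^2].
\end{equation*}

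With mean $\mu = {\boldsymbol\nu}(\Phi,p_{\sf x})$ and variance $\sigma^2 = \frac{1}{N}{\sf Var}_{p_{\sf x}}[\lVert \Phi({\bf x}) \rVert_2^2]$ in hand, I would invoke Cantelli's inequality in the upper-tail form: for any $\lambda > 0$,
\begin{equation*}
\Pr\bigl(\hat{\mathbb E}_{{\bf x}\sim\mathcal D}[\lVert \Phi({\bf x}) \rVert_2^2] - \mu \geq \lambda\bigr) \leq \frac{\sigma^2}{\sigma^2 + \lambda^2}.
\end{equation*}
Setting the right-hand side equal to $\delta/2$ and solving for $\lambda$ gives $\lambda^2 = \sigma^2(2/\delta - 1)$, which after substitution is exactly $\lambda = \sqrt{\frac{{\sf Var}_{p_{\sf x}}[\lVert \Phi({\bf x}) \rVert_2^2](2/\delta - 1)}{N}}$. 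Rearranging yields the claimed one-sided bound with probability at least $1 - \delta/2$.

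There is no real obstacle here beyond bookkeeping: the only subtlety is that Cantelli is a one-sided tail, which is perfectly fine since the lemma only asserts an upper bound on $\hat{\mathbb E}_{{\bf x}\sim\mathcal D}[\lVert \Phi({\bf x}) \rVert_2^2]$ (not a two-sided concentration statement). The reason for using Cantelli rather than Hoeffding (as in Lemma~\ref{lemma:entropy_emp_supp}) is that $\lVert \Phi({\bf x})\rVert_2^2$ is unbounded under a Gaussian mixture, so Hoeffding does not directly apply; Cantelli only requires finite variance, which the mixture-of-Gaussians assumption guarantees (cf.\ Lemma~\ref{lemma:gaussian_mixture_norm_l4_variance}).
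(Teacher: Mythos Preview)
Your proposal is correct and follows essentially the same approach as the paper: compute the mean and variance of the sample average $\hat{\mathbb E}_{{\bf x}\sim\mathcal D}[\lVert \Phi({\bf x}) \rVert_2^2]$ using i.i.d.\ structure and Lemma~\ref{lemma:gaussian_mixture_norm}, then apply Cantelli's one-sided inequality and solve for the deviation at level $\delta/2$. Your additional remark explaining why Cantelli is preferred to Hoeffding here (unbounded $\lVert\Phi({\bf x})\rVert_2^2$ under the Gaussian mixture model) is a nice clarification that the paper's proof does not make explicit.
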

\begin{proof}
  Since $\mathcal D$ has i.i.d. samples of $\mathcal X$,:
  \begin{align}
  {\sf Var}_{p_{\sf x}}[\hat{\mathbb E}_{{\bf x}\sim\mathcal D}[\lVert \Phi({\bf x}) \rVert_2^2]] &= \frac{1}{N^2}\sum_{i=1}^m {\sf Var}_{p_{\sf x}}[\lVert \Phi({\bf x}_i) \rVert_2^2] = \frac{{\sf Var}_{p_{\sf x}}[\lVert \Phi({\bf x}) \rVert_2^2]}{N} \\ \intertext{By Lemma~\ref{lemma:gaussian_mixture_norm_l4_variance}, we have:}
  {\sf Var}_{p_{\sf x}}[\hat{\mathbb E}_{{\bf x}\sim\mathcal D}[\lVert \Phi({\bf x}) \rVert_2^2]] &= \frac{1}{N}\sum_{i=1}^m \alpha_i \big({\sf tr}({\bf \Sigma}_i \circ {\bf \Sigma}_i)^2 + 2\lVert {\bf \Sigma}_i\rVert_F^2 - ({\sf tr}({\bf \Sigma}_i) + \lVert {\bf \mu}_i \rVert_2^2)(\sum_{j=1}^m \alpha_j({\sf tr}({\bf \Sigma}_j) + \lVert {\bf \mu}_j \rVert_2^2))\big) \\
\end{align}
Now, by the Cantelli Inequality, we have for $t>0$:
\begin{align}
  \text{Pr}\left(\hat{\mathbb E}_{{\bf x}\sim\mathcal D}[\lVert \Phi({\bf x}) \rVert_2^2] < \mathbb E_{{\bf x}\sim{p_{\sf x}}}[\lVert \Phi({\bf x}) \rVert_2^2] + t\right) \geq 1 - \left(1 + \frac{t^2}{{\sf Var}_{p_{\sf x}}[\hat{\mathbb E}_{{\bf x}\sim\mathcal D}[\lVert \Phi({\bf x}) \rVert_2^2]]}\right)^{-1} \\ \intertext{Setting RHS ast $1-\delta/2$, we have and solving for $t$, we have with probability at least $1-\delta/2$:}
  \hat{\mathbb E}_{{\bf x}\sim\mathcal D}[\lVert \Phi({\bf x}) \rVert_2^2] \leq \hat{\mathbb E}_{{\bf x}\sim p_{\sf x}}[\lVert \Phi({\bf x}) \rVert_2^2] + \sqrt{\frac{{\sf Var}_{p_{\sf x}}[\lVert \Phi({\bf x}) \rVert_2^2] (2/\delta - 1)}{N}} \\ \intertext{Using the result from Lemma~\ref{lemma:gaussian_mixture_norm} and the definition of \textit{Diversity}, we have with probability at least $1-\delta/2$:}
  \hat{\mathbb E}_{{\bf x}\sim\mathcal D}[\lVert \Phi({\bf x}) \rVert_2^2] \leq {\boldsymbol\nu}(\Phi,p_{\sf x}) +\sqrt{\frac{{\sf Var}_{p_{\sf x}}[\lVert \Phi({\bf x}) \rVert_2^2] (2/\delta - 1)}{N}}
\end{align}
\end{proof}
\begin{theorem}[Theorem 2 from Main Text: Uniform Convergence of Entropy Estimate]With probability at least $1-\delta$,
  \begin{equation*}
\left| \widehat{\mathbb E}_{\mathcal D}[\mathsf H[p(\cdot | {\bf x} ; {\bm \theta})]] - \mathbb E_{\mathsf {\bf x}\sim p_{\sf x}}[\mathsf H[p(\cdot | {\bf x} ; {\bm \theta})]] \right| \leq \lVert {\bf w} \rVert_\infty \Big(\sqrt{\frac{2}{N}{\boldsymbol\nu}(\Phi,p_{\sf x})\log(\frac{4}{\delta})} + {\bf \Theta}\big(N^{-0.75} \big) \Big)
  \end{equation*}
\end{theorem}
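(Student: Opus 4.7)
The plan is to combine the two supporting lemmas already proved in the appendix via a union bound, then split the resulting nested square root into a dominant term and a lower-order remainder.

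First, I would invoke Lemma~\ref{lemma:entropy_emp_supp}, which controls the deviation of the empirical entropy from its expectation by the empirical second moment of the features:
\begin{equation*}
\left| \widehat{\mathbb E}_{\mathcal D}[\mathsf H[p(\cdot | {\bf x} ; {\bm \theta})]] - \mathbb E_{{\bf x}\sim p_{\sf x}}[\mathsf H[p(\cdot | {\bf x} ; {\bm \theta})]] \right| \leq \lVert {\bf w} \rVert_\infty \sqrt{\frac{2\widehat{\mathbb E}_{{\bf x}\sim\mathcal D}[\lVert \Phi({\bf x}) \rVert_2^2]}{N}\log\!\Big(\frac{4}{\delta}\Big)},
\end{equation*}
which holds on an event $E_1$ of probability at least $1-\delta/2$ (this is Hoeffding's inequality applied to the per-sample entropy, whose range was derived in Lemma~\ref{lemma:entropy_bound}). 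Next, I would invoke Lemma~\ref{lemma:cantelli_norm}, which uses Cantelli's inequality together with the variance computation of Lemma~\ref{lemma:gaussian_mixture_norm_l4_variance} to give, on an event $E_2$ of probability at least $1-\delta/2$:
\begin{equation*}
\widehat{\mathbb E}_{{\bf x}\sim\mathcal D}[\lVert \Phi({\bf x}) \rVert_2^2] \leq {\boldsymbol\nu}(\Phi,p_{\sf x}) + \sqrt{\frac{{\sf Var}_{p_{\sf x}}[\lVert \Phi({\bf x}) \rVert_2^2]\,(2/\delta - 1)}{N}}.
\end{equation*}
By the union bound, $E_1 \cap E_2$ has probability at least $1-\delta$, and on this event both inequalities are simultaneously available.

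The only remaining manipulation is to substitute the second inequality into the square root in the first and use the elementary subadditivity $\sqrt{a+b}\leq \sqrt{a}+\sqrt{b}$ for nonnegative $a,b$. This yields
\begin{equation*}
\sqrt{\frac{2\log(4/\delta)}{N}\Big({\boldsymbol\nu}(\Phi,p_{\sf x}) + \sqrt{\tfrac{{\sf Var}_{p_{\sf x}}[\lVert \Phi({\bf x}) \rVert_2^2](2/\delta-1)}{N}}\Big)} \leq \sqrt{\tfrac{2}{N}{\boldsymbol\nu}(\Phi,p_{\sf x})\log(4/\delta)} + \sqrt{\tfrac{2\log(4/\delta)}{N}}\cdot\Big(\tfrac{{\sf Var}_{p_{\sf x}}[\lVert \Phi({\bf x}) \rVert_2^2](2/\delta-1)}{N}\Big)^{1/4}.
\end{equation*}
The second summand scales as $N^{-1/2}\cdot N^{-1/4} = N^{-3/4}$, which absorbs into the advertised ${\bf \Theta}(N^{-0.75})$ term (the feature variance and the $\delta$-dependent factor are constants independent of $N$). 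Multiplying by $\lVert {\bf w}\rVert_\infty$ yields exactly the claimed bound.

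The argument is essentially routine once the two lemmas are in hand; the only subtlety worth flagging is the asymmetry in the Cantelli bound, which is one-sided, but since we only need an upper bound on the empirical second moment inside the square root of Lemma~\ref{lemma:entropy_emp_supp}, this one-sided control is exactly what we need. The main conceptual move---hidden inside the two lemmas---is to use the Gaussian-mixture structure of $\Phi$ to replace the empirical moment by the diversity ${\boldsymbol\nu}(\Phi,p_{\sf x})$, so the final bound depends on the data distribution only through $\boldsymbol\nu$ and a lower-order variance term.
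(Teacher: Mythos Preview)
Your proposal is correct and follows essentially the same route as the paper: invoke Lemma~\ref{lemma:entropy_emp_supp} and Lemma~\ref{lemma:cantelli_norm} on events of probability $1-\delta/2$ each, take a union bound, substitute the Cantelli bound into the Hoeffding bound, and apply $\sqrt{a+b}\le\sqrt a+\sqrt b$ to isolate the $N^{-3/4}$ remainder. Your additional remark that only the one-sided (upper-tail) Cantelli direction is needed, because the empirical second moment appears only inside the outer square root, is exactly right and makes the dependence structure of the argument clearer than the paper's own write-up.
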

\begin{proof}
From Lemma~\ref{lemma:entropy_emp_supp}, we have with probability at least $1-\delta/2$:
\begin{gather}
  \left| \hat{\mathbb E}_{\mathcal D}[\mathsf H[p(\cdot | {\bf x} ; {\bm \theta})]] - \mathbb E_{\mathsf {\bf x}\sim p_{\sf x}}[\mathsf H[p(\cdot | {\bf x} ; {\bm \theta})]] \right| \leq \lVert {\bf w} \rVert_\infty \sqrt{\frac{2\hat{\mathbb E}_{{\bf x}\sim\mathcal D}[\lVert \Phi({\bf x}) \rVert_2^2]}{N}\log(\frac{4}{\delta})} \\ \intertext{From Lemma~\ref{lemma:cantelli_norm}, we also have with probability at least $1-\delta/2$:}
  \hat{\mathbb E}_{{\bf x}\sim\mathcal D}[\lVert \Phi({\bf x}) \rVert_2^2] \leq {\boldsymbol\nu}(\Phi,p_{\sf x}) +\sqrt{\frac{{\sf Var}_{p_{\sf x}}[\lVert \Phi({\bf x}) \rVert_2^2] (2/\delta - 1)}{N}} \\ \intertext{Combining the above two statements using the Union Bound, we have with probability at least $1-\delta$:}
  \left| \hat{\mathbb E}_{\mathcal D}[\mathsf H[p(\cdot | {\bf x} ; {\bm \theta})]] - \mathbb E_{\mathsf {\bf x}\sim p_{\sf x}}[\mathsf H[p(\cdot | {\bf x} ; {\bm \theta})]] \right| \leq \lVert {\bf w} \rVert_\infty \sqrt{\frac{2}{N}({\boldsymbol\nu}(\Phi,p_{\sf x}) +\sqrt{\frac{{\sf Var}_{p_{\sf x}}[\lVert \Phi({\bf x}) \rVert_2^2] (2/\delta - 1)}{N}})\log(\frac{4}{\delta})} \\
  \left| \hat{\mathbb E}_{\mathcal D}[\mathsf H[p(\cdot | {\bf x} ; {\bm \theta})]] - \mathbb E_{\mathsf {\bf x}\sim p_{\sf x}}[\mathsf H[p(\cdot | {\bf x} ; {\bm \theta})]] \right| \leq \lVert {\bf w} \rVert_\infty \left(\sqrt{\frac{2}{N}({\boldsymbol\nu}(\Phi,p_{\sf x})\log(\frac{4}{\delta})} +\left(\frac{4{\sf Var}_{p_{\sf x}}[\lVert \Phi({\bf x}) \rVert_2^2] (2/\delta - 1)}{N^3}\right)^{1/4}\log(\frac{4}{\delta})\right) \\
  \left| \widehat{\mathbb E}_{\mathcal D}[\mathsf H[p(\cdot | {\bf x} ; {\bm \theta})]] - \mathbb E_{\mathsf {\bf x}\sim p_{\sf x}}[\mathsf H[p(\cdot | {\bf x} ; {\bm \theta})]] \right| \leq \lVert {\bf w} \rVert_\infty \Big(\sqrt{\frac{2}{N}{\boldsymbol\nu}(\Phi,p_{\sf x})\log(\frac{4}{\delta})} + {\bf \Theta}\big(N^{-0.75} \big) \Big)
\end{gather}
\end{proof}
\begin{lemma} With probability at least $1-\delta/2$,
  \begin{equation*}
    \hat{\mathbb E}_{\mathcal D}[\mathsf H[p(\cdot | {\bf x} ; {\bm \theta})]] \leq  \mathbb E_{\mathsf {\bf x}\sim p_{\sf x}}[\mathsf H[p(\cdot | {\bf x} ; {\bm \theta})]] + \lVert {\bf w} \rVert_2 \sqrt{\frac{2\hat{\mathbb E}_{{\bf x}\sim\mathcal D}[\lVert \Phi({\bf x}) \rVert_2^2]}{N}\log(\frac{2}{\delta})}
  \end{equation*}
  \label{lemma:entropy_emp_supp_one_sided}
\end{lemma}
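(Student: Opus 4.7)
This statement is a one-sided strengthening of Lemma~\ref{lemma:entropy_emp_supp} (with the probability budget halved, since only the upper tail is controlled) together with a slight loosening that replaces the $\lVert {\bf w}\rVert_\infty$ factor by $\lVert {\bf w}\rVert_2$. My plan is to reuse the previous lemma's derivation almost verbatim, swapping the symmetric two-sided Hoeffding step for its one-sided counterpart. This changes the $\log(4/\delta)$ inside the square root to $\log(2/\delta)$ and yields exactly the claimed expression.

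More concretely, I would first invoke Lemma~\ref{lemma:entropy_bound} to bracket each summand as
\[
\log(C) - 2\lVert {\bf w}\rVert_\infty \lVert \Phi({\bf x}_i)\rVert_2 \;\leq\; \mathsf H[p(\cdot\mid {\bf x}_i ; {\bf w})] \;\leq\; \log(C),
\]
and use the trivial inequality $\lVert {\bf w}\rVert_\infty \leq \lVert {\bf w}\rVert_2$ to widen the lower bound to $\log(C) - 2\lVert {\bf w}\rVert_2 \lVert \Phi({\bf x}_i)\rVert_2$. Each $X_i \triangleq \mathsf H[p(\cdot\mid {\bf x}_i ; {\bf w})]$ therefore lies in an interval whose length is at most $2\lVert {\bf w}\rVert_2 \lVert \Phi({\bf x}_i)\rVert_2$. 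Applying the one-sided Hoeffding inequality (Theorem~\ref{thm:Hoeffdings}) to $S = \sum_{i=1}^N (X_i - \mathbb E[X_i])$ then produces
\[
\Pr\!\Big(\widehat{\mathbb E}_{\mathcal D}[\mathsf H] - \mathbb E_{p_{\sf x}}[\mathsf H] \geq t\Big) \;\leq\; \exp\!\Big(-\frac{N^2 t^2}{2 \lVert {\bf w}\rVert_2^2 \sum_{i=1}^N \lVert \Phi({\bf x}_i)\rVert_2^2}\Big).
\]
Equating the right-hand side to $\delta/2$, solving for $t$, and identifying $\tfrac{1}{N}\sum_i \lVert\Phi({\bf x}_i)\rVert_2^2 = \widehat{\mathbb E}_{{\bf x}\sim \mathcal D}[\lVert \Phi({\bf x})\rVert_2^2]$ delivers the desired bound.

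There is no real obstacle here; the argument is essentially a cosmetic variant of the proof of Lemma~\ref{lemma:entropy_emp_supp}. The only points requiring a little attention are keeping the $2N^2$ factor in the Hoeffding exponent (as in Theorem~\ref{thm:Hoeffdings}), and noting that the expectation $\mathbb E_{p_{\sf x}}[\widehat{\mathbb E}_{\mathcal D}[\mathsf H]]$ of the empirical average coincides with $\mathbb E_{p_{\sf x}}[\mathsf H]$ under i.i.d.\ sampling, which is precisely why the concentration statement can be written directly against the population expectation rather than against the expected empirical average.
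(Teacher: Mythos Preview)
Your proposal is correct and follows essentially the same route as the paper: bound each entropy term via Lemma~\ref{lemma:entropy_bound} (loosened with $\lVert {\bf w}\rVert_\infty \le \lVert {\bf w}\rVert_2$), apply the one-sided Hoeffding inequality to the i.i.d.\ summands, and solve for $t$ at level $\delta/2$. The only cosmetic difference is that the paper writes the bracket directly with $\lVert {\bf w}\rVert_2$ rather than first stating it with $\lVert {\bf w}\rVert_\infty$ and then widening.
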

\begin{proof}
  Since $\mathcal D$ has i.i.d. samples of $\mathcal X$, we have:
\begin{align}
\mathbb E_{\mathsf {\bf x}\sim p_{\sf x}}\left[\hat{\mathbb E}_{\mathcal D}\left[\mathsf H\left[p(\cdot | {\bf x} ; {\bf w})\right]\right]\right] = \frac{1}{N}\sum_{i=1}^N \mathbb E_{\mathsf {\bf x}\sim p_{\sf x}}\left[\mathsf H\left[p(\cdot | {\bf x}_i ; {\bf w})\right]\right] &= \mathbb E_{\mathsf {\bf x}\sim p_{\sf x}}\left[\mathsf H\left[p(\cdot | {\bf x} ; {\bf w})\right]\right] \\ \intertext{From Lemma~\ref{lemma:entropy_bound}, we know that for sample ${\bf x}$:}
\log(m) - 2 \lVert {\bf w} \rVert_2 \lVert \Phi({\bf x}) \rVert_2 \leq \mathsf H\left[p(\cdot | {\bf x} ; {\bf w})\right] &\leq \log(m) \\ \intertext{Thus, by applying one-sided Hoeffding's Inequality we get:}
\text{Pr}\left(\hat{\mathbb E}_{\mathcal D}[\mathsf H[p(\cdot | {\bf x} ; {\bm \theta})]] - \mathbb E_{\mathsf {\bf x}\sim p_{\sf x}}\left[\hat{\mathbb E}_{\mathcal D}\left[\mathsf H\left[p(\cdot | {\bf x} ; {\bf w})\right]\right]\right] \geq t \right) &\leq \exp{\frac{-2N^2t^2}{4 \lVert {\bf w} \rVert_2^2 \sum_{i=1}^N \lVert \Phi({\bf x}_i)\rVert^2}}
\end{align}
Setting RHS as $\delta/2$, we have with probability at least $1-\delta/2$:
\begin{align}
\hat{\mathbb E}_{\mathcal D}[\mathsf H[p(\cdot | {\bf x} ; {\bm \theta})]] &\leq \mathbb E_{\mathsf {\bf x}\sim p_{\sf x}}[\mathsf H[p(\cdot | {\bf x} ; {\bm \theta})]]+ \lVert {\bf w} \rVert_2 \sqrt{\frac{2\hat{\mathbb E}_{{\bf x}\sim\mathcal D}[\lVert \Phi({\bf x}) \rVert_2^2]}{N}\log(\frac{2}{\delta})}
\end{align}
\end{proof}
\begin{corollary}[Corollary 1 from the Main Text: Theorem 1 in terms of Variance of Norm]With probability at least $1-\delta$,
\begin{equation*}
\lVert {\bf w} \rVert_2 \geq \frac{\log(C) - \widehat{\mathbb E}_{{\bf x} \sim \mathcal D}[\mathsf H[p(\cdot | {\bf x} ; {\bm \theta})]]}{ \big(2 - \sqrt{\frac{2}{N}\log(\frac{2}{\delta})}\big) \sqrt{{\boldsymbol\nu}(\Phi,p_{\sf x})} - {\bf \Theta}\big(N^{-0.75} \big) }
\end{equation*} \label{theorem:l2_entropy_sample_supp}
\end{corollary}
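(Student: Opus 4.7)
The plan is to combine Theorem~\ref{theorem:l2_entropy_supp} (the population version) with the two one-sided concentration statements already proved in Lemma~\ref{lemma:entropy_emp_supp_one_sided} and Lemma~\ref{lemma:cantelli_norm}, via a union bound. The rearranged conclusion of Theorem~\ref{theorem:l2_entropy_supp} reads
\[
\mathbb E_{{\bf x}\sim p_{\sf x}}[\mathsf H[p(\cdot|{\bf x};{\bm\theta})]] \geq \log(C) - 2\lVert{\bf w}\rVert_2 \sqrt{{\boldsymbol\nu}(\Phi,p_{\sf x})},
\]
so if I can replace the expected entropy on the left by the empirical entropy plus a controlled slack that is linear in $\lVert{\bf w}\rVert_2$, then I can solve for $\lVert{\bf w}\rVert_2$ by simply moving all terms proportional to $\lVert{\bf w}\rVert_2$ to one side.

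The slack is produced in two steps. First, Lemma~\ref{lemma:entropy_emp_supp_one_sided} gives, with probability at least $1-\delta/2$,
\[
\mathbb E_{{\bf x}\sim p_{\sf x}}[\mathsf H] \;\geq\; \widehat{\mathbb E}_{\mathcal D}[\mathsf H] - \lVert{\bf w}\rVert_2\sqrt{\tfrac{2\widehat{\mathbb E}_{\mathcal D}[\lVert\Phi({\bf x})\rVert_2^2]}{N}\log(\tfrac{2}{\delta})}.
\]
But the right-hand side still depends on the random quantity $\widehat{\mathbb E}_{\mathcal D}[\lVert\Phi\rVert_2^2]$, so I apply Lemma~\ref{lemma:cantelli_norm} to upper bound it by ${\boldsymbol\nu}(\Phi,p_{\sf x}) + \sqrt{{\sf Var}_{p_{\sf x}}[\lVert\Phi\rVert_2^2](2/\delta-1)/N}$, which holds with probability at least $1-\delta/2$. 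A union bound gives both events simultaneously with probability at least $1-\delta$, and substituting the Cantelli bound inside the square root, then using the elementary inequality $\sqrt{a+b}\leq \sqrt{a}+\sqrt{b}$ to split the two contributions, produces
\[
\mathbb E_{{\bf x}\sim p_{\sf x}}[\mathsf H] \;\geq\; \widehat{\mathbb E}_{\mathcal D}[\mathsf H] - \lVert{\bf w}\rVert_2\sqrt{\tfrac{2\log(2/\delta)}{N}}\sqrt{{\boldsymbol\nu}(\Phi,p_{\sf x})} - \lVert{\bf w}\rVert_2\,{\bf \Theta}(N^{-3/4}),
\]
where the last term absorbs the $\bigl(\log(2/\delta)\bigr)$ factor and the finite moments $ {\sf Var}_{p_{\sf x}}[\lVert\Phi\rVert_2^2]$ guaranteed by Lemma~\ref{lemma:gaussian_mixture_norm_l4_variance}.

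Chaining this into the rearranged Theorem~\ref{theorem:l2_entropy_supp} and collecting all the terms that carry a $\lVert{\bf w}\rVert_2$ factor on one side yields
\[
\lVert{\bf w}\rVert_2\Bigl(2\sqrt{{\boldsymbol\nu}(\Phi,p_{\sf x})} - \sqrt{\tfrac{2\log(2/\delta)}{N}}\sqrt{{\boldsymbol\nu}(\Phi,p_{\sf x})} - {\bf \Theta}(N^{-3/4})\Bigr) \;\geq\; \log(C) - \widehat{\mathbb E}_{\mathcal D}[\mathsf H],
\]
and dividing through gives exactly the claimed bound. The assumption $1-\delta > 1/2$ (equivalently $\delta < 1/2$) ensures $\sqrt{(2/N)\log(2/\delta)} < 2$ once $N$ is not too small, so the coefficient multiplying $\sqrt{{\boldsymbol\nu}}$ in the denominator is positive and division preserves the direction of the inequality.

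The main obstacle is purely bookkeeping: verifying that the correction term arising from feeding the Cantelli bound into the Hoeffding bound really scales as $N^{-3/4}$ rather than $N^{-1/2}$ or $N^{-1/4}$. This comes from the fact that I am taking a square root of a term of order $N^{-1}\cdot\sqrt{{\sf Var}/N} = N^{-3/2}$, which yields $N^{-3/4}$; the constants (including the $2/\delta - 1$ from Cantelli, the $\log(2/\delta)$ from Hoeffding, and the fourth-moment quantities from Lemma~\ref{lemma:gaussian_mixture_norm_l4_variance}) are all finite and independent of $N$, so they can be safely hidden inside the ${\bf \Theta}(\cdot)$ notation. Everything else is algebraic rearrangement, and we recover Theorem~\ref{theorem:l2_entropy_supp} in the limit $N\to\infty$ as a sanity check.
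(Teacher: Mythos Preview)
Your proposal is essentially the paper's own argument: invoke Lemma~\ref{lemma:entropy_emp_supp_one_sided} and Lemma~\ref{lemma:cantelli_norm}, combine them by a union bound, substitute the Cantelli estimate inside the Hoeffding slack, split with $\sqrt{a+b}\le\sqrt a+\sqrt b$ to expose the $N^{-3/4}$ correction, and then chain into the rearranged Theorem~\ref{theorem:l2_entropy_supp} before solving for $\lVert{\bf w}\rVert_2$. The sequence of lemmas, the union-bound budget split, and the bookkeeping on the $N^{-3/4}$ rate all match the paper exactly.
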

\begin{proof}
From Lemma~\ref{lemma:entropy_emp_supp_one_sided}, we have with probability at least $1-\delta/2$:
\begin{gather}
  \hat{\mathbb E}_{\mathcal D}[\mathsf H[p(\cdot | {\bf x} ; {\bm \theta})]] \leq  \mathbb E_{\mathsf {\bf x}\sim p_{\sf x}}[\mathsf H[p(\cdot | {\bf x} ; {\bm \theta})]] + \lVert {\bf w} \rVert_2 \sqrt{\frac{2\hat{\mathbb E}_{{\bf x}\sim\mathcal D}[\lVert \Phi({\bf x}) \rVert_2^2]}{N}\log(\frac{2}{\delta})} \\ \intertext{From Lemma~\ref{lemma:cantelli_norm}, we also have with probability at least $1-\delta/2$:}
  \hat{\mathbb E}_{{\bf x}\sim\mathcal D}[\lVert \Phi({\bf x}) \rVert_2^2] \leq {\boldsymbol\nu}(\Phi,p_{\sf x}) +\sqrt{\frac{{\sf Var}_{p_{\sf x}}[\lVert \Phi({\bf x}) \rVert_2^2] (2/\delta - 1)}{N}} \\ \intertext{Combining the above two statements using the Union Bound, we have with probability at least $1-\delta$:}
  \hat{\mathbb E}_{\mathcal D}[\mathsf H[p(\cdot | {\bf x} ; {\bm \theta})]] \leq \mathbb E_{\mathsf {\bf x}\sim p_{\sf x}}[\mathsf H[p(\cdot | {\bf x} ; {\bm \theta})]] + \lVert {\bf w} \rVert_2 \sqrt{\frac{2}{N}({\boldsymbol\nu}(\Phi,p_{\sf x}) +\sqrt{\frac{{\sf Var}_{p_{\sf x}}[\lVert \Phi({\bf x}) \rVert_2^2] (2/\delta - 1)}{N}})\log(\frac{2}{\delta})} \\ \intertext{From Theorem~\ref{theorem:l2_entropy}, we know:}
  \lVert {\bf w} \rVert_2 \geq \frac{\log(C) - \mathbb E_{{\bf x} \sim p_{\sf x}}[\mathsf H[p(\cdot | {\bf x}  ; {\bm \theta})]]}{2\sqrt{{\boldsymbol\nu}(\Phi,p_{\sf x})}} \\ \intertext{Combining this with the previous statement, we have with probability at least $1-\delta$:}
  \lVert {\bf w} \rVert_2 \geq \frac{\log(C) - \hat{\mathbb E}_{{\bf x} \sim \mathcal D}[\mathsf H[p(\cdot | {\bf x} ; {\bm \theta})]]}{\sqrt{{\boldsymbol\nu}(\Phi,p_{\sf x})} - \sqrt{\frac{2}{N}({\boldsymbol\nu}(\Phi,p_{\sf x}) +\sqrt{\frac{{\sf Var}_{p_{\sf x}}[\lVert \Phi({\bf x}) \rVert_2^2] (2/\delta - 1)}{N}})\log(\frac{2}{\delta})}} \\
\lVert {\bf w} \rVert_2 \geq \frac{\log(C) - \widehat{\mathbb E}_{{\bf x} \sim \mathcal D}[\mathsf H[p(\cdot | {\bf x} ; {\bm \theta})]]}{ \big(2 - \sqrt{\frac{2}{N}\log(\frac{2}{\delta})}\big) \sqrt{{\boldsymbol\nu}(\Phi,p_{\sf x})} - {\bf \Theta}\big(N^{-0.75} \big) }
\end{gather}
\end{proof}
\section*{Appendix 3: Training Details on FGVC}
\textbf{ResNet-50:} Training is done for 40k iterations with batch-size 8 with an initial learning rate of 0.005. Optimal $\gamma$ for each dataset is given in Table \ref{tab:hyp_resnet}.
\begin{table}[!hbtp]
\centering
\begin{tabular}{|l|l|}
\hline
\textbf{Dataset} &  $\gamma$  \\ \hline
CUB2011          & 0.9  \\ \hline
NABirds          & 0.7   \\ \hline
Stanford Dogs    & 0.7 \\ \hline
Cars             & 0.8   \\ \hline
Aircraft         & 1   \\ \hline
\end{tabular}
\caption{Regularization parameter $\gamma$ for ResNet-50 experiments.}
\label{tab:hyp_resnet}
\end{table}

\textbf{Bilinear and Compact Bilinear CNN:} We follow the training routine given by the  authors\footnote{\url{https://github.com/gy20073/compact_bilinear_pooling/tree/master/caffe-20160312/examples/compact_bilinear}}. Optimal $\gamma$ for each dataset is given in Table \ref{tab:hyp_bilinear}.

\begin{table}[!hbtp]
\centering
\begin{tabular}{|l|l|}
\hline
\textbf{Dataset} & $\gamma$  \\ \hline
CUB2011          & 1                            \\ \hline
NABirds          & 1                           \\ \hline
Stanford Dogs    & 1              \\ \hline
Cars             & 1                             \\ \hline
Aircraft         & 1                           \\ \hline
\end{tabular}
\caption{Regularization parameter $\gamma$ for Bilinear CNN experiments.}
\label{tab:hyp_bilinear}
\end{table}

\textbf{DenseNet-161:} Training is done for 40k iterations with batch-size 32 with an initial learning rate of 0.005. Optimal $\gamma$ for each dataset is given in Table\ref{tab:hyp_densenet}.
\begin{table}[!hbtp]
\centering
\begin{tabular}{|l|l|}
\hline
\textbf{Dataset} &  $\gamma$  \\ \hline
CUB2011          & 0.8                                      \\ \hline
NABirds          & 1                                 \\ \hline
Stanford Dogs    & 0.8                                \\ \hline
Cars             & 1                                            \\ \hline
Aircraft         & 0.8                          \\ \hline
\end{tabular}
\caption{Regularization parameter $\gamma$ for DenseNet-161 experiments.}
\label{tab:hyp_densenet}
\end{table}

\textbf{GoogLeNet:} Training is done for 300k iterations with batch-size 32, with a step size of 30000, decreasing it by a ratio of 0.96 every epoch. Optimal hyperparameters are given in Table \ref{tab:hyp_googlenet}.

\begin{table}[!hbtp]
\centering
\begin{tabular}{|l|l|}
\hline
\textbf{Dataset} & $\gamma$ \\ \hline
CUB-200-2011     & 10                               \\ \hline
NABirds          & 1                                             \\ \hline
Stanford Dogs    & 1                                                 \\ \hline
Cars             & 1                                              \\ \hline
Aircraft         & 1                                        \\ \hline
\end{tabular}
\caption{Regularization parameter $\gamma$ for GoogLeNet experiments.}
\label{tab:hyp_googlenet}
\end{table}

\textbf{VGGNet-16:} Training is done for 40k iterations with batch-size 32, with a linear decay of the learning rate from an initial value of 0.1. Optimal $\gamma$ is given in Table \ref{tab:hyp_vggnet}.
\begin{table}[!hbtp]
\centering
\begin{tabular}{|l|l|}
\hline
\textbf{Dataset} &  $\gamma$  \\ \hline
CUB2011          & 1                                       \\ \hline
NABirds          & 1                                   \\ \hline
Stanford Dogs    & 1                                     \\ \hline
Cars             & 1                                            \\ \hline
Aircraft         & 1                            \\ \hline
\end{tabular}
\caption{Regularization parameter $\gamma$ for VGGNet-16 experiments.}
\label{tab:hyp_vggnet}
\end{table}

\begin{figure}[th]
\centering
\includegraphics[width=\linewidth]{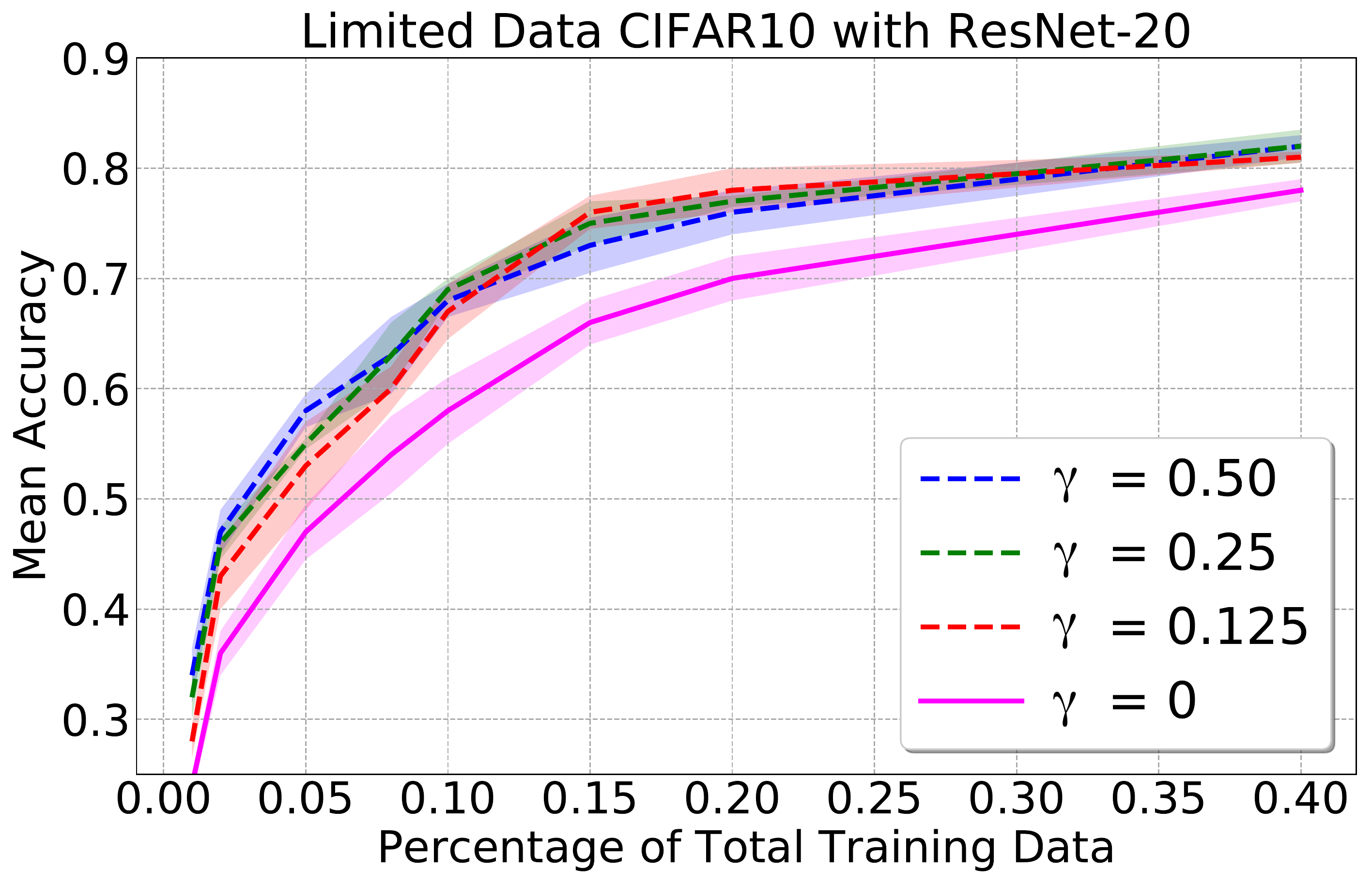}
\label{fig:cifar10_variation}
\caption{We get consistent improvement in validation accuracy as the amount of training data is increased. Curves plotted for various values of $\gamma$ on CIFAR10 with model ResNet20.}
\end{figure}

{\small
\bibliographystyle{plain}
\bibliography{refs}
}
\end{document}